\newtheorem{theorem}{Theorem}[section]
\newtheorem{definition}[theorem]{Definition}
\setlist[itemize]{itemsep=1pt,leftmargin=5.5mm}
\def\M{\mathcal{M}}
\def\H{\mathcal{H}}
\def\R{\mathcal{R}}
\def\RR{\mathbb{R}}
\def\S{\mathcal{S}}
\def\x{\mathbf{x}}
\def\p{\mathbf{p}}
\def\real{\mathbb{R}}
\DeclareMathOperator*{\esssup}{ess\,sup}
\DeclareMathOperator*{\argmin}{arg\,min}
\DeclareMathOperator*{\Med}{Med}
\DeclareMathOperator*{\argmax}{arg\,max}
\title{Semi-Supervised Manifold Learning  with Complexity Decoupled Chart Autoencoders}
\author[1]{Stefan C. Schonsheck}
\affil[1]{Department of Mathematics \\
University of California, Davis\\
Davis, CA 95616, USA}
\author[2]{Scott Mahan}
\affil[2]{Department of Mathematics\\
University of California San Diego \\
La Jolla, CA 92093, USA}
\author[3]{Timo Klock}
\affil[3]{Machine Intelligence Department\\
Simula Research Laboratory \\
Oslo, Norway}
\author[4]{Alexander Cloninger$^{2,}$}
\affil[4]{Halocio{\u g}lu Data Science Institute\\
University of California San Diego \\
La Jolla, CA 92093, USA}
\author[5]{Rongjie Lai}
\affil[5]{Department of Mathematics\\
Purdue University\\
West Lafayette, IN 47907, USA}
\date{}
\begin{document}


\maketitle

\begin{abstract}
Autoencoding is a popular method in representation learning. Conventional autoencoders employ symmetric encoding-decoding procedures and a simple Euclidean latent space to detect hidden low-dimensional structures in an unsupervised way. Some modern approaches to novel data generation such as generative adversarial networks askew this symmetry, but still employ a pair of massive networks--one to generate the image and another to judge the images quality based on priors learned from a training set. This work introduces a chart autoencoder with an asymmetric encoding-decoding process that can incorporate additional semi-supervised information such as class labels. Besides enhancing the capability for handling data with complicated topological and geometric structures, the proposed model can successfully differentiate nearby but disjoint manifolds and intersecting manifolds with only a small amount of supervision. Moreover, this model only requires a low-complexity encoding operation, such as a locally defined linear projection. We discuss the  approximation power of such networks and derive a bound that essentially depends on the intrinsic dimension of the data manifold rather than the dimension of ambient space. Next we incorporate bounds for the sampling rate of training data need to faithfully represent a given data manifold. We present numerical experiments that verify that the proposed model can effectively manage data with multi-class nearby but disjoint manifolds of different classes, overlapping manifolds, and manifolds with non-trivial topology. Finally, we conclude with some experiments on computer vision and molecular dynamics problems which showcase the efficacy of our methods on real-world data. 
\end{abstract}

\section{Introduction} 
It is commonly believed that real-world data concentrates near low-dimensional structures embedded in high-dimensional space \cite{Belkin2003, fefferman2016testing, mishne2019diffusion}. Therefore, detecting the hidden low-dimensional structure is a crucial problem in data science, often referred to as manifold learning. 
Auto-encoding~\cite{bourlard1988auto,hinton1994autoencoders,liou2014autoencoder} is a key concept to learn low-dimensional structures from high-dimensional observations via dimensional reduction for efficient data representation in an unsupervised manner. It enables wide applications in many important tasks such as dimensional reduction \cite{bengio2015deep}, novel data generation \cite{kingma2013auto}, anomaly detection \cite{an2015variational}, molecular property prediction \cite{lim2018molecular}, dynamic system prediction \cite{otto2019linearly} and many more \cite{ng2011sparse, bengio2013representation, xu2018unsupervised, kusner2017grammar, donati2020deep}.

Conventional autoencoders (and their variants) map data to a low-dimensional Euclidean domain. However, the desired continuity and invertibility of the encoder and decoder indicate that the flat geometry of the latent space is often too simple to appropriately represent a data manifold with non-trivial topology. For example, it is well-known that the unit sphere in $\mathbb{R}^3$ is not homeomorphic to $\mathbb{R}^2$--not to mention a manifold with holes, a manifold with multiple disconnected components, or intersecting manifolds. Therefore, decoded samples drawn from the simple euclidean latent space will be unlike the training distribution if the underlying distribution is concentrated on a low-dimension manifold with disconnected components, holes or overlaps as shown in  \cite{mishne2019diffusion, schonsheck2019chart, tanielian2020learning}. A common method to overcome this problem is to map the data to a higher dimensional latent space. This approach presents two inescapable problems, first, the increased dimension of the latent space makes analysis much more difficult and second the latent distributions are not guaranteed to be compact, further increasing the diffusely of latent exploration. 


Another important question is how to create interpretable embeddings given additionally supervised (or semi-supervised) information. For example, labeling can help differentiate nearby but disjoint manifolds of different classes. Consider simultaneously learning a dataset of handwritten digits and the labels of these digits as a function defined on the data. This distribution has intersections (or at least near intersections) if there are poorly written digits (ex: 6s and 0s that look alike). Conventional Euclidean latent representations of these intersections insist that the data be in one of these classes. However, it is more reasonable to model these points as intersection of two manifolds and treat these points as being members of each class. Similarly, continuous labeling can also be helpful in structuring a latent space so that it is amenable for downstream analysis. Consider a dataset of pictures taken of a single scene from multiple angles. The camera angle of these pictures can be used to organize the pictures so that one direction in the latent space represents a change in the vertical angle and an orthognal direction representes a change in the horizontal. 

In this work, we present a method for simultaneously learning manifolds and labels given semi-supervised or fully-supervised information. To overcome data manifolds with complicated topology and possible nearby intersecting components, we learn the data manifold via a chart-atlas construction inspired by the chart autoencoder introduced in~\cite{schonsheck2019chart}. In addition, we present a  mechanism to encode overlapping manifolds with different labels by incorporating semi-supervised label information. We extend this framework to learn data manifolds and functions thereon simultaneously in supervised and semi-supervised regimes. 

Unlike conventional auto-encoders whose encoding and decoding steps share similar structures and complexity, the proposed encoding process enjoys a very low computational complexity. This asymmetric encoding-decoding procedure significantly reduces the computation cost. Theoretically,  our approximation analysis can be conducted with neural networks of width and depth that scale with the latent dimension of the data and only a weak dependence on the ambient space dimension. Finally, we consider numerical experiments on both synthetic and real-world data to validate the effectiveness of the proposed method. 

To summarize,  our main contributions are:
\begin{enumerate}\item We introduce a mechanism to incorporate semi-supervised information (labels) into the chart predictor/encoder. This allows for encoding overlapping manifolds with different labels;
\item We explore asymmetric encoder-decoder auto-regressive modes, which save computation cost. Specifically we examine models with linear encoders patched together though a partition of unity function based on radial basis networks; 
\item We theoretically discuss the approximation power of such networks that minimizes the influence of the ambient dimension; in particular we improve upon the theoretical approximation bounds of \cite{liao2019adaptive, shaham2018provable, schonsheck2019chart, aizenbud2021regression} for neural manifold regression

\end{enumerate}

\paragraph{Related Work}
Extracting the low-dimensional structures by modeling data as being sampled from some unknown manifolds has led to many dimension reduction techniques in the past two decades~\cite{tenenbaum2000global, roweis2000nonlinear, cox2001, Belkin2003, He2003, zhang2004principal, Kokiopoulou2007, maaten2008visualizing, liao2019adaptive}. Auto-encoding and variational auto-encoding~\cite{bourlard1988auto,hinton1994autoencoders,liou2014autoencoder,kingma2013auto} are important techniques to learn low-dimensional structures in an unsupervised manner. 
These models' unsupervised and oversimplified latent spaces make them insufficient to handle data with nontrivial topology, disconnected components, or multiple nearby/intersecting components. 

Recently, there has been an increasing interest in introducing non-Euclidean latent space for better representation.  \cite{falorsi2018explorations} introduced a non-Euclidean latent space to guarantee the existence of a homeomorphic representation, realized by a homeomorphic variational autoencoder. However, this approach requires that the topology of the dataset is known. Similarly, several recent works~\cite{davidson2018hyperspherical, rey2019disentanglement, connor2019representing} have introduced autoencoders with (hyper-)spherical or other, fixed non-euclidean latent spaces. Without assuming prior knowledge of the topological structure of the data, chart autoencoders (CAE) \cite{schonsheck2019chart} allow for the homeomorphic representations of distributions with arbitrary topology. The authors also show that there exists a CAE which can faithfully represent any manifold but that the number of charts may depend on the covering number of the manifold. In this work, we improve their worst-case scenario bound and prove that for many cases the number of charts needed is much lower. We further extend CAE to incorporate with semi-supervised information for detecting manifolds with overlapping structures. A similar class of chart-based models was recently pros posed by \cite{floryan2022data}. These methods decouple the chart prediction and econding problems by first dividing the training set into overlapping clusters via a K-means neighbors algorithm, then learns an seperate encoder for each of the clusters. In this paper, we refer to this as a divide and conquer approach. Our method has two main advantages over this approach. First, we parameterize the chart prediction function with a neural network that allows us to capture datasets which are disconnected for which K-means based approaches fail. Second, since we learn the patches and encoder at the same time our patches can be of different sizes, allowing us to cover larger 'flat' regions with a single chart, and areas of higher curvature with a collection of smaller charts. Further discussion of this is presented in section \ref{sec:finite_distortion}.

It is well known that VAEs and generative adversarial networks (GANs) cannot accurately represent data from disconnected distributions \cite{tanielian2020learning, mishne2019diffusion}.
One crucial reason is that these models have connected latent space and continuous layers. Therefore, data observed from separable distributions become (computationally) inseparable in the latent space.
For example, given two points $x$ and $y$ taken from disconnected manifolds, a non-charted autoencoder generates latent representations $z_x$ and $z_y \in \RR^n$ where $n$ is the dimension of the latent space. These points can be connected with continuous curves and since these layers in the models are continuous, the outputs of the decoder (or generator in the GAN case) will also be continuous. Therefore the newly generated data will fall along a continuous path between the disconnected manifolds from which $x$ and $y$ are sampled. In some applications, this may be desirable, but for many it is not. For examples, interpolating between two head shots of two individuals may produce an array of novel faces which have not been seen before and could be usefully for synthetic data generation or artistic pursuits. However, it is unlikely that many of the images produced by interpolating between "2" and "5" in a network trained on MNIST handwritten digits have any value. 

Several methods \cite{azadi2018discriminator,pmlr-v97-turner19a} have been proposed to overcome this problem during data generation via \textit{rejection sampling}, in which newly generated data is `thrown out' if it doesn't close enough to resembling something from the training data. Unfortunately, this results in `dead zones' in the latent space, which can not generate meaningful data. 
Similarly, other methods attempt to model the latent space using manifold learning techniques \cite{radhakrishnan2018memorization, mishne2019diffusion, li2020variational} on the latent space. However, these require more complex sampling schemes and essentially employ autoencoding as a black-box nonlinear dimensional reduction technique. Our approach completely avoids this problem by employing compactly supported charts, which are organized via a partition of unity. As a result, we provide a more interpretable latent space and do not require any post-processing in data generation.  

Many provable neural network approximation results~\cite{csaji2001approximation, gyorfi2006distribution, shaham2018provable, chen2019efficient, cloninger2021deep} for low dimensional manifolds focus on approximating a function $f$ supported on or near some smooth $d$-dimensional manifold isometrically embedded in $\RR^D$ with $d \ll D$. However, these results often assume the manifold is both known and well behaved. 
Recently, several efforts have explicitly bounded the size of networks needed to represent an unknown manifold \cite{schonsheck2019chart} or estimate it with non-parametric methods \cite{aizenbud2021non}. This work  
improves upon these results to characterize the size of networks needed to accurately approximate manifolds under very general regularity conditions on the chart maps. We show the approximation power of such networks that essentially depends on the intrinsic dimension of the data manifold. The number of charts needed to cover many manifolds is surprisingly few as shown later in Section \ref{sec:finite_distortion}.  


\section{Description of the Problem}
We begin by describing the mathematical model of data which motivates the proposed representation. The \textit{fuzzy manifold} hypothesis states that coherent data in high dimension concentrates \textit{near} lower-dimensional manifold(s) \cite{Belkin2003, mishne2019diffusion}. For example, a dataset of $256 \times 256$ natural images of $k$ different objects rotating along a single axis in 3D will form a set of $k$ 1D manifolds in $\RR^{256^2}$. In addition, we would like to learn a function on this data. Here, this function may be categorical (ex:labels of the picture) or continuous (ex:the rotation angle). 

Different from conventional regression problems on a manifold where the manifold structure is given, we assume that the ground manifold structure of the data is unknown. A naive way to jointly learn the manifold and function is to concatenate the coordinate data and function values. Then one must learn the distribution in the concatenated ambient space. This problem is significantly harder due to the curse of dimensional and the fact that concatenating this label information can destroy the underlying structure of image manifold. For example, the image data wouldl likely be amenable to analysis with a convolutional neural network (CNN), but concatenating additional information to the image vector is likely to degrade the performance of this CNN. Other works have attempted to solve this problem by learning a structured latent space based on a topological priors \cite{davidson2018hyperspherical, connor2019representing, schonsheck2022spherical}. However, these approaches require problem-specific insights about the manifold's topology and the function's behavior, which are not available in many contexts. Alternatively, we propose a more computationally tractable strategy in which we factor the problem into a manifold learning problem and functional approximation problem.

\begin{figure}[h]
\centering
     \includegraphics[width=0.65\textwidth]{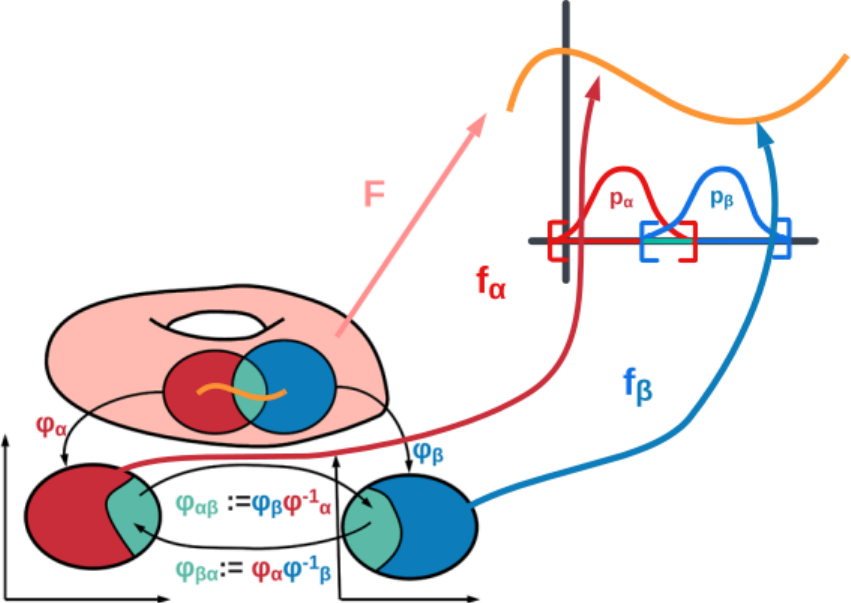}
     \caption{Represent a function $F:\M \rightarrow \mathbb{R}$ from samples of $F$ on a manifold $\M \subset \RR^D$ through charts. We first parameterize $\M$ with an atlas of overlapping charts $\{U_\alpha,\phi_\alpha\}_\alpha^N$, each of which is homeomorphic to $\mathbb{R}^d$. Then the function $F$ can be locally approximated a chart $U_\alpha$ by composing function $f_\alpha:\RR^d \rightarrow \RR$ with $\phi_\alpha$. That is, for $x \in U_\alpha$,  $F(x) \approx f_\alpha( \phi_\alpha (x))$}
     \label{fig:philosophy}
\end{figure}

More precisely, we consider the case when data is sampled on a union of finite $d$-dimensional manifolds $\M = \cup_i \M_i$ in an ambient space  $\mathbb{R}^D$. 
Based on a dataset sampled on $\M$ and samples of a function $F:\M \rightarrow \mathbb{X}$, we would like to simultaneously learn the data manifold and the function thereon. We remark that $F$ can be categorical (namely, vector valued such as a one hot encoding in which case $\mathbb{X}$ would be a probability space over the number of classes) in a classification problem or continuous in a regression problem in which case $\mathbb{X}$ would be $\RR^n$ for some $n$. In particular, samples of values of $F$ are viewed as semi-supervised information which allow us to impose a structure on the latent space.

Our overall approach is demonstrated in Figure \ref{fig:philosophy}, whereby we learn a collection of locally supported overlapping charts $\{U_\alpha,\phi_\alpha\}_\alpha$ to parameterize the manifold and then approximate the desired function $F$ on the manifold by gluing together its local approximations via a partition of unity functions $\{\rho_\alpha\}_\alpha$. Recall that the partition of unity functions satisfy: a) $\text{Suppt}(\rho_\alpha) \subset U_\alpha$; b) for any point $x$, there is a neighbourhood of 
where only a finite number of the functions of $\rho_\alpha$ are not 0 and $\sum_{\alpha} \rho_\alpha(x) = 1$.  
Figure \ref{fig:arch} details how we realize this mathematical model with learnable network modules.

\paragraph{Topological Concerns}

Chart autoencoders (CAEs) were introduced in \cite{schonsheck2019chart} to represent data from manifolds with non-trivial topology without prior knowledge of the topology. The authors introduce the concept of an \emph{$\epsilon$-faithful representation}, which quantitatively measures the topological and geometric approximation of autoencoders to the data manifold. An essential result of this work is that an auto-regressive model cannot accurately represent a manifold distributed dataset unless the latent space is homeomorphic to the data manifold. This motivates the use of multi-latent space models in which each chart covers only a compact region.

Consider a double torus shown in Figure~\ref{fig:Eight}. In the case where the latent dimension is two, they fail to generate data to cover the whole manifold, in this case the manfiold has been 'flattened'. When the latent dimension is increased to three, the newly sampled data do not all stay on the manifold. Additionally, near by points on the manifold may be far apart in the latent space because homeomorphism is violated. In either case, we can see that even if the reconstruction of the training data is good, sampling from from the latent space does not produce a novel data-set which mimics the toplogical propterites

\begin{figure}[ht]
  \centering
  \includegraphics[width=.8\linewidth]{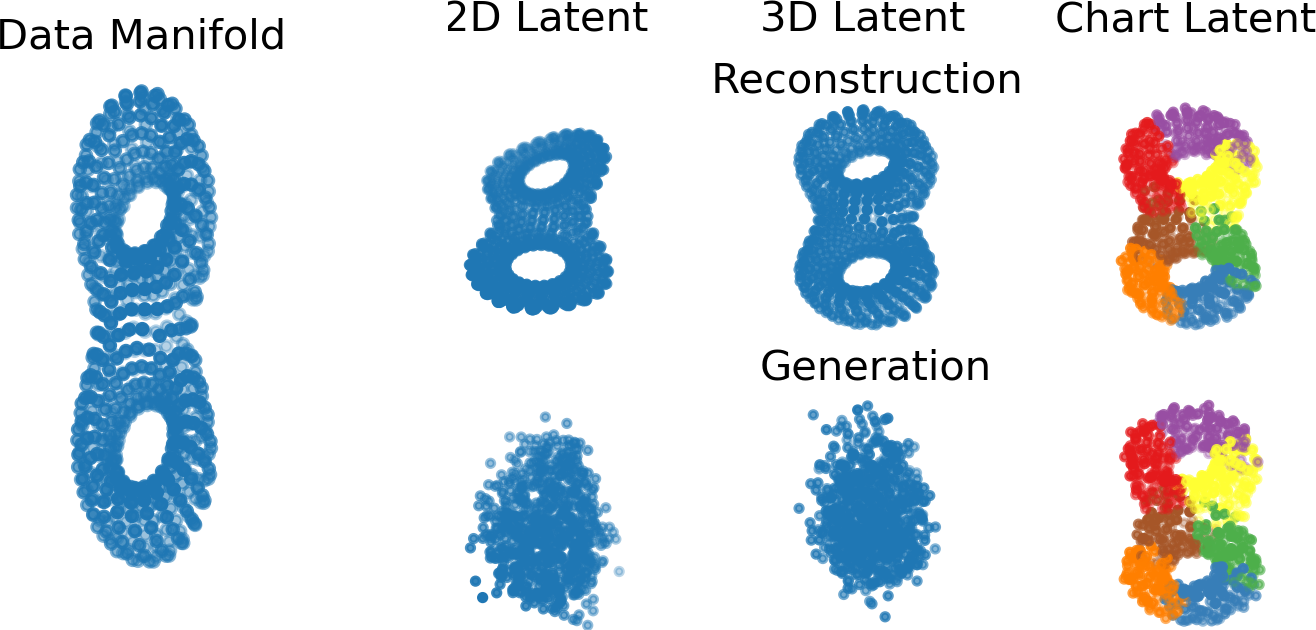}
  \caption{\textbf{Left:} Training Data \textbf{Right:} Reconstruction of training data and sampling of new data using a 2D VAE, a 3D VAE and a 2D Chart autoencoder}
  \label{fig:Eight}
\end{figure}

Instead, on the rightmost column of Figure~\ref{fig:Eight} the same double torus is parameterized by using seven color-coded charts. Here we observe accuracy in both Reconstruction and generation This example exhibits several characteristics and benefits of the proposed work: (i) the charts collectively cover the manifold and faithfully preserve the topology (two holes); (ii) the charts overlap (as evident from the coloring); (iii) new points sampled from the latent space remain on the manifold; (iv) nearby points on the manifold remain close in the latent space; and (v) because of the preservation of geometry, one may accurately estimate geometric proprieties (such the geodesics and curvature).

\paragraph{Approximation Bounds}

By explicitly characterizing trade-offs between the number of charts in the representaion, complexity of the sub-networks, local geometry of the data manifold manifold and sampling rate of the training data, we improve upon previous theoretical approximation bounds of \cite{liao2019adaptive, shaham2018provable, schonsheck2019chart, aizenbud2021regression} for manifold learning. In particular, we prove that the number of latent space charts can actually be quite small and does not need to scale with the covering number of the manifold. In some instances, the number of charts is independent of ambient dimension and the number of data points altogether. Instead, we establish a finite distortion condition that only requires the encoder to be injective. Another significant benefit of this relaxation is that it allows the encoders to be quite simple, potentially even linear projections of large pieces of the manifold. Moreover, this trade-off is shown not to affect the decoder complexity significantly.

\paragraph{Incorporating Supervision}
As an extension of the CAE architecture, we propose to learn manifold structure and functions thereon simultaneously.  
Once we have learned the local chart structure of the data manifold, we organize them via a partition of unity function, making learning $F$ much easier. Different from conventional autoencoders which do not combine with supervised information, this additional information can be used to improve the manifold approximation in a supervised (or semi-supervised) manner. This is an important step to handle data with multi-class nearby but disjoint manifolds. If the function to be learned is categorical, then we use constant (but learnable) functions $f_\alpha$ to separate the classes to learn disconnected or overlapping manifolds. By enforcing that each chart covers examples from one class of the data, we can trivially factor the problem into single-class subproblems. Then, after the model is trained, we can generate novel data from any category by sampling from the appropriate chart latent spaces.


\begin{figure}
\centering
\minipage{0.24\linewidth}
\centering
  \includegraphics[width=.99\linewidth]{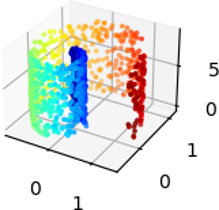}\\
  \centering (a)
\endminipage\hfill
\minipage{0.24\linewidth}
\centering
  \includegraphics[width=.99\linewidth]{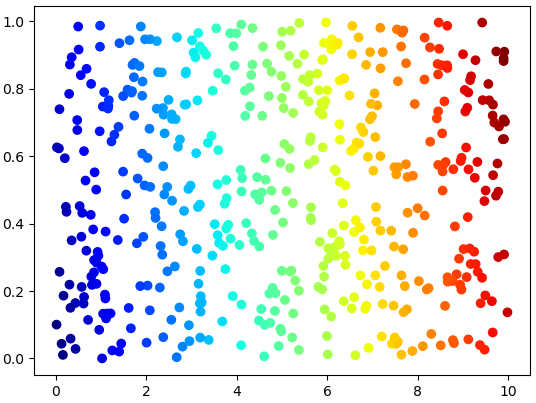}\\
  \vspace{0.3cm}
    \centering (b)
\endminipage\hfill
\minipage{0.24\linewidth}
\centering
  \includegraphics[width=.99\linewidth]{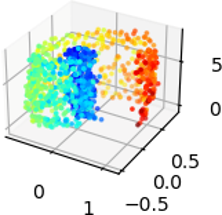}\\
    \centering (c)
\endminipage\hfill
\minipage{0.24\linewidth}%
\centering
  \includegraphics[width=.99\linewidth]{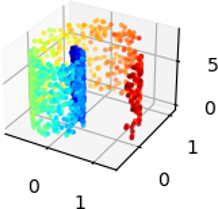}\\
    \centering (d)
\endminipage
\caption{A function on the Swiss roll data. (a) Training Data; (b) Function on flattened data; (c) Points generated by 4D autoencoder; (d) Points generated by learning latent representation and function thereon.}
\label{fig:Swiss1}
\end{figure}

One advantage of this approach is it allows for intrinsic learning of functions on the manifold other than categorical labels. Since each local approximation maps to a Euclidean latent space, we can use standard functional approximation techniques such as constant, linear, polynomial, or multilevel perceptrons to approximate the given function. Finally, by composing this approximation with the chart functions, we approximate $F$. Figure \ref{fig:Swiss1} shows a Swiss roll example of this where the ambient dimension is 3, the intrinsic dimension is 2, $F$ maps to $\RR$ and is linear along certain geodesics of the manifold. It is evident that the function to be learned is linear with respect to the intrinsic metric of the manifold. For this case, learning the function with knowledge of the intrinsic metric only requires four more parameters (i.e. one linear layer). In contrast, one naive approach to learning a representation of this data would be to concatenate the coordinates of the Swiss roll with the function data and then train an autoencoder on $\RR^4$. This requires many more parameters since the dimension of the problem is greater. Another disadvantage of this naive approach occurs when the information of $F$ is not complete--i.e. when there are points on the manifold which are unlabeled. Standard models cannot use these to train the concatenated representation. Our approach involves learning a manifold representation $\{U_\alpha, \phi_\alpha\}$ and a latent function approximation $f:\RR^2 \rightarrow \RR$, then approximating the $F \approx f_\alpha( \phi_\alpha)$ from the latent space. Here, we train a standard VAE on the four-dimensional data with roughly 5k parameters and a similarly sized manifold + latent function model with approximately 3k parameters. Despite being much smaller, the manifold + latent model does a better job of representing the roll's geometry as well as the values of the function.


\section{Description of Model}
\begin{figure}[h]
    \includegraphics[width=0.95\textwidth]{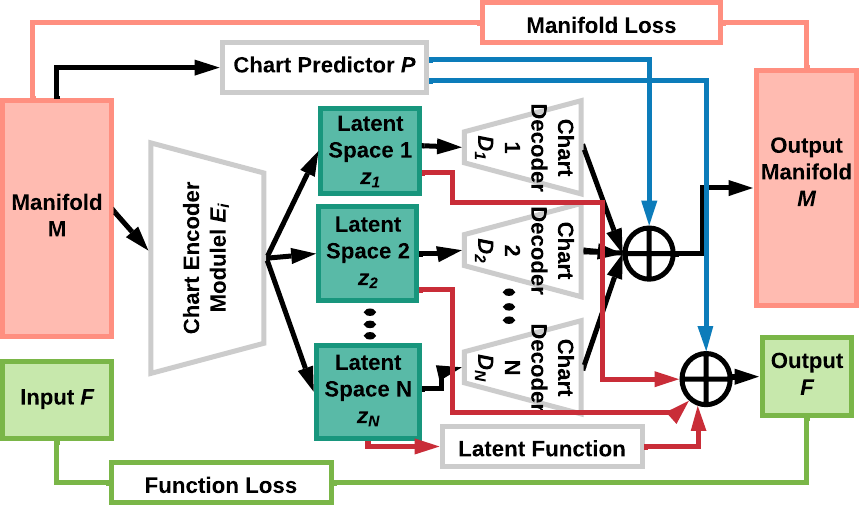}
    \caption{Network architecture. Black lines indicate the flow of manifold data, red indicates function approximation, blue represent chart probabilities. Boxes with gray outlines represent neural modules and colored boxes represent data spaces. }
    \label{fig:arch}
\label{fig:idea}
\end{figure}
The overall architecture of our model is presented in Figure \ref{fig:idea}, where the data manifold has been parameterized by a chart structure, and a function can be viewed as composition between chart parameterization and functions on the latent space.  
The encoder module $E$ is formed from a set of $N$ parallel, simple, independent encoders $E_i$.  Given an input point $x$, the encoder module, $E$, maps to a series of $N$ different latent spaces $Z_i$, each of which is a unit cube in the desired latent dimension. In section \ref{sec:finite_distortion}, we show that these sub-modules can be very simple--even linear projections. A set partition of unity functions, $\p =\{\p_i\}$ is used to predict which chart(s) contain the point, and therefore which latent representation should be used in the reconstruction. These functions, which we dub the chart predictor, defines a probability distribution on the data allowing us to ``glue'' the charts together in an overlapping or disconnected manner (see examples in Figure \ref{fig:trg}). Each latent space is equipped with a decoder module that maps back to the observation space, generating a reconstruction $y_i.$ We also define a latent function $f_i$, which can be combined with the encoder and partition of unity to approximate $F$ on each chart. That is $F(x) \approx \{f_i(E_i(x)) \big| i = \argmax_i \p_i (x)\}$. Given a trained model, to encode a single point we can use $\p(x)$ to determine which chart(s) to use for the encoding, saving us computation. Full descriptions of each of the sub-modules purposes and exact architectures for the numerical experiments are presented in the Supplementary Materials. 

\subsection{Description of Submodules}

\paragraph{Encoder and Decoder} We parameterize the chart functions $\{\phi_i \}$ and their inverses with multilevel perceptrons. The encoder module, $E$, maps input data to $N$ compact latent spaces. That is, $E:x \in \M \subset \RR^D \rightarrow \{\mu, \sigma\}_{i=1}^N \in \RR[0,1]^d \bigoplus \RR[0,1]^d $. Each of these latent spaces, which we sometimes refer to as  chart spaces, is equipped with its own decoder which maps from the latent space back to the data manifold $D_i: z_i \in M_i \subset \RR^D$. 

In traditional autoencoder frameworks, the encoder and decoder are often thought of as being approximate inverses of each other, and have symmetric architecture. However, this is unnecessary in our approach. Later, we show that non-linear data can be adequately modeled with a collection of local encoders formed by a single linear multiplication followed by a sigmoid activation function. The key insight is that as long as the encoder maps data from a local neighborhood with finite distortion, then local PCA provides a good enough approximation of the manifold that we can learn the an approximation exponential map (via the decoder). In general, this condition is not guaranteed to hold for traditional auto encoding frameworks. Empirically we observe that for may networks with more complicated encoding networks fail to meet this condition. Moreover, since each of our decoders only has to decode a small portion of the entire manifold, they can be much simpler than the global decoders more commonly used.

\paragraph{Latent Space} 
The so called `reparameteriztion trick' \cite{kingma2013auto} is used in variational autoencoders to ensure that the data is `nicely' distributed in the latent space. We adapt this idea to encourage the distributions within each chart to be concentrated near the center of the chart space. Then, rather than directly outputting a latent variable $z_i$ for the $i^{th}$ chart space, the encoder will instead predict a Gaussian distribution in the space parameterized by the output $\mu_i$, $z_i$. During training, we sample from these distributions and decode the samples, as is standard in variation auto-encoding frameworks.

\paragraph{Chart Predictor} Since the encoder model produces a latent representation of a given data point in each chart space we must choose which of the charts to use for reconstruction of said point. To do so we build a partition of unity over the data manifold. In the origal chart auto-encoder \cite{schonsheck2019chart} this was achieved though a neural network ending in a softmax layer: $P: x_i \rightarrow p \in \Omega$. More recently \cite{floryan2022data} employed a similar achetecure, but used a k-means-like algorithm to divide the data set into overlapping charts before training the encoder and decoder modules. This method guarantees that the predicted charts are compact (in the ambient space) and allow for each of the charts to be trained independently (or in parallel). However, this approach is not always the most efficient in terms of number of charts needed to cover a space. For example, \cite{floryan2022data} states that they need at-least 4 charts to cover a torus, when it can be homeomophically mapped to a single 2D latent space with period boundary conditions. 

In this work, we present a third option which hybridizes these approaches. We employ a varitant of a radial basis function network to determine the parition of untity between the charts. More specifically, we define the predictor as:

\begin{equation}
    P(x) = \mathbf{S} (y(x)^T w) \quad y_i(x) = \exp ( \frac{-||x-X_i ||^2}{ \sigma})
\end{equation}
where $w$ and $\sigma$ are learnable parameters, $\mathbf{S}$  is the softmax function, and the centeroids $X_i$ are determined by decoding the center of each latent space. The advantage of this approach is that we have a guarantee of local compactness similar to the k-means approach, but with additional flexibility to have anisotripicly parameterized charts which can be smaller on areas of the manifold witch high curvature and larger on more flat regions. 

\paragraph{Function Approximation} To approximate a function $F$ on the manifold we use a function defined on the chart space. Each chart is equipped with a $f_i: z_i \rightarrow \hat x_i \in \RR$. Depending on the application $f$ may be parameterized as a constant function, polynomial basis or neural network. If functional data is only available on some subset of the training data (i.e. the labels are incomplete) we only run this module on those data points. However, in this regime we can still use the unlabeled points to train the encoders and decoder modules

\subsection{Initialization} \label{app:initial}
In order to efficiently train local representations of the data we propose an initialization scheme to ensure that each of the chart functions initializes to different regions of the data. To train an $N$-chart model, given some training data $x\in \mathcal{M}$ with functional samples $f(x)$, we first perform farthest point k-means sampling on the lifted varifold $\mathcal{P} = \M \bigoplus f(\M)$ to obtain $N$ well distributed initialization points $\{\tilde{x}\}_{j=1}^N$. We then assign one of these points to each of the $N$ charts and train them to reconstruct the assigned point, with the latent representation at the center of the latent space. Note that since we represent the latent space of each chart as a unit hypercube $\RR^d [0,1]$, this center is the vector $\frac{\textbf{1}}{\textbf{2}}$. We also use these points to pre-train the chart prediction module. The initialization loss is:

\begin{equation}
\begin{split}\label{eqn:InitLoss}
  \mathcal{L}(x):=   \sum_{j=1}^N  \Big( |E(x)_j - \frac{\textbf{1}}{\textbf{2}}|_2^2  +  |D_j(\frac{\textbf{1}}{\textbf{2}})-\tilde{x}_j|_2^2 \\
  + |p_j - \delta_j|_2^2 +
  \mathcal{F}(f(x),f_j(\tilde{x_j})) \Big)
\end{split}
\end{equation}

With this in mind, we can think of training on the full dataset as a region growing scheme, whereby new points are added to the domain of each chart. However, we stress that during training the centers of these regions may move. Moreover, by employing the chart removal process detailed in \cite{schonsheck2019chart} we could further simplify the representation by eliminating charts that only cover a small portion of the data, or one for which the nearby charts are already doing an adequate representing. 

A natural question to ask is how many charts do we need to cover a given manifold. Theoretically, the minimum number is given by the  Lusternik-Schnirelmann category of the underlying space. However, finding this is computationally intractable. Instead we follow the philosophy of \cite{floryan2022data} and choose a number which we believe to be larger than necessary number. There is no harm in having too many charts, and it has been discussed in \cite{schonsheck2019chart} that unused charts can be automatically removed during the training process by monitoring the partition of unity functions.

\paragraph{Loss and Training}
During training, each chart decoder produces an output $y_i = \textbf{D}_i \circ \textbf{E}_i(x)$. If $x$ lies on the domain of only one chart, then we only require that this chart should properly reconstruct this point. If $x$ lies in an overlapping region, then all of the overlapping charts should minimize this error. To obtain a sensible partition of unity, $\{\p_{i}\}$, we measure cross-entropy between the prediction and the inverse of the reconstruction error and minimize it.  Similarly, the function approximation loss from each chart is also weighted by $\p$ to ensure that the functional loss is only propagated to the local charts. Then we have the following training loss:
\begin{align}
\begin{split}\label{eqn:TrainLoss}
  \mathcal{L}(x)&:=  \min_i |x - x_i|_2^2 
   + \lambda \sum_{i=1}^N \p_i \Big( |x-x_i|_2^2 \\  
   &+ \mathcal{F}(f(x),f_j(\tilde{x_j})) + 
   \mathcal{KL}(\mathcal{N}(\mu_i, \sigma_i), \mathcal{N}_{\frac{1}{2},\frac{1}{4}}  ) \Big)
\end{split}
\end{align}
where $\lambda$ is a hyper-parameter, $\mathcal{F}$ is cross-entropy loss for classification-type problems and mean square error for regression type problems, and $\mathcal{N}_{\frac{\textbf{1}}{\textbf{2}},\frac{1}{4}}$ is the $d$-dimensional normal distribution centered at midpoint of the unit cube , $(\frac{1}{2})^d$, with standard deviation $\frac{1}{4}$. Note that this is a different Gaussian than standard VAEs employ since our latent spaces are compact unlike the standard VAE setup.

\paragraph{Data Generation}
After training, the model can generate novel data by sampling the latent space and decoding the samples. To create a dataset with a distribution close to that of the training set, we first measure how often each of the charts is used in the reconstruction of the training set. We then sample each of the charts proportionally to the number of times it was used in reconstruction. If the dataset has class labels, we can generate data from specific classes by sampling only from the appropriate charts. If the dataset is unlabeled, we can still generate data similar to a given an example by sampling from the latent space to which the given sample encodes and the neighboring charts (see Figure \ref{fig:trg}). This locality property is due to the compact support of charts and is not always the case for standard VAEs and GANs as shown by \cite{davidson2018hyperspherical, schonsheck2019chart}. 

\section{Theoretical Analysis}\label{sec:theory}

\paragraph{Low Complexity Encoder}  \label{sec:finite_distortion}
Rather than requiring a single latent space that is nearly Euclidean as with conventional autoencoders, we simply require a bi-Lipschitz mapping of each $d$-dimensional chart $U_i$. This allows us to reduce the number of charts needed, from a covering number that scales exponentially with the intrinsic dimension to a small number of charts independent of the intrinsic dimension in certain circumstances \cite{shaham2018provable, aizenbud2021regression}. This relaxed assumption also allows us to use simpler functions for the encoding operation. We seek a collection of bi-Lipschitz charts $(U_i,\phi_i)$ for our data manifold, which we call a finite distortion embedding. 

\begin{definition}
    An atlas $\{(U_i,\phi_i)\}_{i=1}^n$ is a \textbf{finite distortion embedding} of $\M$ if each chart function $\phi_i: U_i \to \real^d$ is bi-Lipschitz.
\end{definition}

The generality of this finite distortion condition also allows us to use a smaller number of charts than previous works. Later, we state and prove a somewhat restrictive but easily checked condition that guarantees a finite distortion embedding of $\M$ using only linear projections. We also conjecture an intuitive, more general sufficient condition for a finite distortion embedding that is more difficult to confirm for real-world data.

\paragraph{Chart Reach Condition}
We characterize the regularity of a given atlas parameterization by measuring the reach of each chart. Defined originally in \cite{federer1959curvature}, the reach of a set $X$ is the largest number $\tau_X$ such that every point within distance $\tau_X$ of $X$ has a unique nearest neighbor on $X$. More formally, given a closed set $X \subset \real^D$ and a point $z \in \real^D$, let $d(z,X) = \inf_{x \in X} \|z-x\|$ denote the distance from $z$ to $X$. The medial axis $\Med(X)$ of $X$ is define as the set of points in $\real^D$ that have at least two nearest neighbors in $X$:
\begin{equation*}
    \Med(X) = \{ z \in \real^D ~|~ \text{there exist } x \neq y \in X
    \text{ such that } \|x-z\|=\|y-z\|=d(z,X) \}.
\end{equation*}
The reach $\tau_X$ of $X$ is then defined as the distance from $X$ to $\Med(X)$:
\begin{equation*}
    \tau_X = d(X,\Med(X)) = \inf_{x \in X} d(x,\Med(X))  
    = \inf_{z \in \Med(X)} d(X,z).
\end{equation*}

The reach of a set captures the curvature and smoothness of the set and how close it is to self-intersecting. It can also be described as the radius of the largest ambient ball that can be freely ``rolled around'' the set. This intuition is illustrated in Figure \ref{fig:reach}.

\begin{figure}[H] 
\centering
\scalebox{1.35}{\begin{tikzpicture}
    \draw [black] plot[smooth,domain=-2:2] (\x, {(\x)^2/2});
    \draw [dashed, blue] (0,1) circle (1);
    \filldraw (0,1) circle (1pt);
    \draw [dashed] (0,1) -- (0,2.2);
    \draw [dashed, blue] (0,1) -- (0.965926,1-0.258819);
    \node (X) at (2.1,2) {\tiny $X$};
    \node (M) at (0.48,2.2) {\tiny $\Med(X)$};
    \node (T) [blue] at (0.45,0.75) {\tiny $\tau_\mathcal{M}$};
\end{tikzpicture}}
\caption{Geometric intuition of reach.}
\label{fig:reach}
\end{figure} 

Choosing charts with larger reach allows us to embed them via bi-Lipschitz linear projections, which together yield a finite distortion embedding of $\M$. Formally, we have the following result.

\begin{theorem} \label{thm:reach}
    Let $\delta>0$. Suppose each $U_i \subset \M \subset \real^D$ is a $d$-dimensional connected submanifold satisfying $\|u_1-u_2\| \leq 2\tau_{U_i} - \delta$ for all $u_1,u_2 \in U_i$. Then for each $U_i$ there is a $d$-dimensional subspace $\H_i \subset \real^D$ such that the projection map $\pi_{\H_i} : U_i \to \H_i$ is bi-Lipschitz with lower Lipschitz constant $\big( 1 + (D-d)\tfrac{2\tau_{U_i}}{\delta}\big)^{-1/2}$.
\end{theorem}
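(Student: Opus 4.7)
Plan. My strategy is to let $\H_i$ be the affine tangent plane $p_0+T_{p_0}U_i$ at a suitably chosen ``central'' $p_0\in U_i$ (one for which $\pi_{\H_i}|_{U_i}$ is injective; such a point exists by a standard compactness/choice argument using the reach together with the strict diameter bound). Writing $\pi := \pi_{\H_i}$, the upper Lipschitz constant is automatically $1$, so only the lower bound requires argument. By Pythagoras, the goal reduces to
\[
\|(I-\pi)(p-q)\|^2 \;\leq\; (D-d)\,\tfrac{2\tau_{U_i}}{\delta}\,\|\pi(p-q)\|^2 \qquad \forall\, p,q\in U_i,
\]
which I would split along an orthonormal basis $\{e_{d+1},\ldots,e_D\}$ of $T_{p_0}U_i^\perp$ into coordinate-wise estimates $|\langle p-q,e_k\rangle|^2 \leq (2\tau_{U_i}/\delta)\|\pi(p-q)\|^2$; the stated $(D-d)$ factor then falls out from summing over the $D-d$ normal directions.

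The key analytic input is the classical reach estimate (Federer 1959; Niyogi--Smale--Weinberger 2008): for every $p\in U_i$, $\|(I-\pi)(p-p_0)\| \leq \|p-p_0\|^2/(2\tau_{U_i})$, together with the companion chord--tangent angle bound $\sin\angle(p-q,T_pU_i)\leq \|p-q\|/(2\tau_{U_i})$ and the tangent-plane rotation bound $\|P_p-P_{p_0}\|_{\mathrm{op}}\leq \|p-p_0\|/\tau_{U_i}$. Rearranging the height estimate into the quadratic $\|(I-\pi)(p-p_0)\|^2 - 2\tau_{U_i}\|(I-\pi)(p-p_0)\| + \|\pi(p-p_0)\|^2 \leq 0$ automatically forces $\|\pi(p-p_0)\|\leq \tau_{U_i}$, and the branch containing $p_0$ pins $\|(I-\pi)(p-p_0)\| \leq \tau_{U_i}-\sqrt{\tau_{U_i}^2-\|\pi(p-p_0)\|^2}$. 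Combined with injectivity of $\pi|_{U_i}$ (from the chord--tangent bound plus the strict gap $\delta$), this realizes $U_i$ as a smooth graph $p = p_0+x_p+f(x_p)$ over $V := \pi(U_i-p_0)\subset T_{p_0}U_i$, with $f(0)=0$, $Df(0)=0$, and $(I-\pi)(p-q)=f(x_p)-f(x_q)$.

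In the graph representation the per-coordinate inequality reduces to a uniform pointwise slope estimate $\|Df(x)\|_{\mathrm{op}}^2 \leq 2\tau_{U_i}/\delta$ on $V$, which I would then integrate along the straight segment from $x_q$ to $x_p$. The extremal case is the spherical graph $f_*(x):=\tau_{U_i}-\sqrt{\tau_{U_i}^2-\|x\|^2}$, which saturates the reach inequality and has $\|\nabla f_*(x)\|^2 = \|x\|^2/(\tau_{U_i}^2-\|x\|^2)$; with a central $p_0$ the pairwise diameter bound translates to $\|\pi(p-p_0)\| \leq \tau_{U_i}-\delta/2$, and substituting gives $\|\nabla f_*\|^2 \leq (\tau_{U_i}-\delta/2)^2/(\tau_{U_i}\delta - \delta^2/4) \leq 2\tau_{U_i}/\delta$ exactly. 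The main obstacle I anticipate is upgrading this extremal slope bound to arbitrary $U_i$ of reach at least $\tau_{U_i}$: the pointwise envelope $\|f(x)\|\leq f_*(\|x\|)$ does not by itself imply $\|Df(x)\|_{\mathrm{op}} \leq \|\nabla f_*(x)\|$, so a genuine comparison argument is required---most likely one combining the reach-forced graph constraint, the tangent-plane rotation bound, and path integration in $U_i$ to transfer the one-dimensional extremal slope direction-by-direction. I expect this is where \ref{app:reach} does the heavy lifting.
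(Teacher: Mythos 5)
Your outline differs from the paper's at exactly the place you flag as the main obstacle, and the paper does not do the comparison argument you are hoping for; it avoids the derivative bound entirely.

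The paper's construction (Appendix~\ref{app:reach}, Theorem~\ref{thm:reach2}) does not start from the affine tangent plane at a central $p_0$ and represent $U_i$ as a graph. Instead it places an ambient ball $\Bc$ of radius $\tau_{U}$ tangent to $U$ (the ``reach ball'') and then chooses the target hyperplane $\H$ to be the plane that cuts $\Bc$ at chord length $2\tau_U-\delta$, with $U$ lying entirely on the far side of $\H$. The key estimate is not a pointwise bound on $\|Df\|_{\mathrm{op}}$ followed by integration, but a direct \emph{secant} bound: for any $u_1,u_2\in U$, the slope
\[
\frac{|\langle u_1-u_2,\bm{n}\rangle|}{\|\pi_{\H}(u_1)-\pi_{\H}(u_2)\|}
\]
is bounded by the slope of the line tangent to $\Bc$ at $\Bc\cap\H$, because a steeper secant would force a geodesic of $U$ to have curvature exceeding $\tau_U^{-1}$, contradicting the tangency of $\Bc$. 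Evaluating that tangent slope gives $(\tau_U-\delta/2)/\sqrt{\delta\tau_U-\delta^2/4}<\sqrt{2\tau_U/\delta}$, and then Pythagoras along an orthonormal frame $\{v_1,\dots,v_{D-1},\bm{n}\}$ closes Case~1. Case~2 ($1\le d<D-1$) is handled by iterating the Case~1 construction in $D-d$ mutually orthogonal normal directions, setting $\H=\bigcap_k\H_k$, and summing the $D-d$ per-direction estimates, which is where the $(D-d)$ factor comes from. So the two proofs agree on the Pythagoras/per-normal-direction reduction and on the target numerical constant $2\tau_U/\delta$ (your extremal spherical-graph computation correctly recovers the same bound), but the mechanism for obtaining it is different.

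The gap you identify is genuine and would need to be filled: the pointwise envelope $|f(x)|\le\tau_U-\sqrt{\tau_U^2-\|x\|^2}$ obtained from Federer's height estimate at $p_0$ does \emph{not} imply $\|Df(x)\|_{\mathrm{op}}^2\le 2\tau_U/\delta$; a graph function can hug the envelope from below and still have steep local slope. To rescue the graph strategy you would have to apply the reach/tangent-ball condition at \emph{every} $p\in U$, not only at $p_0$, which is precisely what the paper's secant argument packages geometrically: because $U$ lies between $\H$ and the exterior of a $\tau_U$-ball tangent to $U$ at the relevant point, every secant is squeezed below the ball's tangent line. A related minor issue: the sign in your rearranged quadratic is flipped (Federer gives $h^2-2\tau_U h+\rho^2\ge 0$, not $\le 0$), so the conclusion $h\le\tau_U-\sqrt{\tau_U^2-\rho^2}$ requires a separate continuity/branch argument and, crucially, does not by itself force $\rho\le\tau_U$; you also never justify that a ``central'' $p_0$ exists with $\|\pi(p-p_0)\|\le\tau_U-\delta/2$ for all $p$, since the hypothesis only bounds pairwise distances, not distances to a chosen basepoint. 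The paper's construction sidesteps this too, by anchoring $\H$ to the tangent ball rather than to a point of $U$.
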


\begin{proof}
    See Appendix \ref{app:reach}.
\end{proof}

Theorem \ref{thm:reach} gives a sufficient condition for $\{(U_i,\pi_{\H_i})\}_{i=1}^n$ to be a finite distortion embedding of $\M$. The reach of each chart has an intuitive geometric meaning and can often be calculated exactly for simple manifolds or approximated computationally for more complex ones \cite{aamari2019estimating, suarez2021turing}. Below, we provide an example where the condition in Theorem \ref{thm:reach} describes the exact minimum number of charts needed to encode each one injectively via a simple linear projection.

\paragraph{Example 1} Let $\M = \S^1 \subset \real^2$. In polar coordinates, $\S^1 = \{ s=(r,\theta) \,|\, r=1,~ \theta \in [-\pi,\pi]\}$ where $-\pi$ and $\pi$ are identified. Then for $s \in \S^1$, define the charts
$U_1 = \{s \,|\, \theta \in (-\tfrac{\pi}{3}-\epsilon, \tfrac{\pi}{3}+\epsilon)\}$,
$U_2 = \{s \,|\, \theta \in (\tfrac{\pi}{3}-\epsilon, \pi+\epsilon)\},$ and 
$U_3 = \{s \,|\, \theta \in (-\pi-\epsilon, -\tfrac{\pi}{3}+\epsilon)\}$ 
for $\epsilon>0$. Consider $s_1 = (1,\tfrac{\pi}{3})$ and $s_2 = (1,-\tfrac{\pi}{3})$ in $\S^1$ as in the left of Figure~\ref{fig:normal_vectors}. Then $\|s_1-s_2\| = 2\sin(\pi/3) = {\sqrt{3}}$ in $\real^2$. While $s_1$ and $s_2$ are not the farthest apart points in $U_1$, given any $\eta>0$ we can choose $\epsilon$ small enough so that $\|u_1-u_2\| < \sqrt{3}+\eta$ for all $u_1, u_2 \in U_1$. The reach of each chart $U_i$ is $\tau_{U_i}=1$.
Thus, each $U_i$ satisfies the assumption of Theorem \ref{thm:reach} with $\delta=0.25$ for $\epsilon$ small enough. 
The chart $U_1$ can be linearly projected onto the space $\H_1 = \{(0,y) \in \real^2 \,|\, y \in \real\}$ via the map $\pi_{\H_1}(x,y) = y$, as shown in Figure \ref{fig:normal_vectors}. This projection is clearly injective and bi-Lipschitz, and similar maps exist for $U_2$ and $U_3$. Thus, this choice of charts admits a finite distortion embedding of $\M$.

Note that in this example, we can find a finite distortion embedding via linear projections if and only if each chart has an arc length less than $\pi$. As soon as the arc length of a chart $U_i$ exceeds $\pi$, we can choose two antipodal points $u_1, u_2 \in U_i$ such that $\|u_1-u_2\| = 2\tau_{U_i} = 2$, so the condition of Theorem \ref{thm:reach} is not satisfied. Moreover, $U_i$ cannot be projected injectively onto a linear subspace in this case since it includes a semicircle. Hence, in this example, the condition of Theorem \ref{thm:reach} coincides exactly with the existence of a finite distortion embedding via linear projections.

\paragraph{Normal Vector Characterization}
Theorem \ref{thm:reach} only provides a sufficient condition for a manifold to have a finite distortion embedding. In reality, there is a much larger class of manifolds, both smooth and non-smooth, that have finite distortion embeddings. Moreover, we can often reduce the number of charts below the number required by Theorem \ref{thm:reach} and still obtain a finite distortion embedding.

\paragraph{Example 2} 
Let $\M \subset \real^2$ be a triangle with sides $T_1$, $T_2$, and $T_3$, parameterized as
$T_1 = \{(x,0) \,|\, x \in [-1,1]\}$, 
$T_2 = \{(x,x+1) \,|\, x \in [-1,0]\}$, and
$T_3 = \{(x,1-x) \,|\, x \in [0,1]\}$.

If we allow closed charts, we can choose $U_i=T_i$ so that each chart has infinite reach (being just a line segment). These charts satisfy Theorem \ref{thm:reach} and each side of the triangle is already contained in an affine subspace of $\real^2$. However, if we ignore the requirements of Theorem \ref{thm:reach} we can get a finite distortion embedding of $\M$ with only 2 charts. In particular, set $U_1=T_1$ and $U_2 = T_2 \cup T_3$ with the projection map $\pi_{\H_2}(x,y) = x$, as shown in the center of Figure~\ref{fig:normal_vectors}. These charts clearly admit a finite distortion embedding, but they do not satisfy the assumptions of Theorem \ref{thm:reach} since $\tau_{U_2}=0$.

\begin{figure}[h]
    \centering
    \def\r{2.4}
    \scalebox{0.7}{\begin{tikzpicture}
        \draw [dashed] (0,0) circle (\r);
        \draw [ultra thick, blue] (1.2,-2.078461) arc (-60:60:\r);
        \draw [-stealth, blue] (\r,0) -- (\r+0.5,0);
        \draw [-stealth, blue] (2.318222,0.621166) -- (2.801185,0.750575);
        \draw [-stealth, blue] (2.318222,-0.621166) -- (2.801185,-0.750575);
        \draw [-stealth, blue] (2.078461,1.2) -- (2.511474,1.45);
        \draw [-stealth, blue] (2.078461,-1.2) -- (2.511474,-1.45);
        \draw [-stealth, blue] (1.697056,1.697056) -- (2.050610,2.050610);
        \draw [-stealth, blue] (1.697056,-1.697056) -- (2.050610,-2.050610);
        \node (U) [blue] at (2.12,1.62) {$U_1$};
        \draw [stealth-stealth, very thick, red] (0,-2.8) -- (0,2.8);
        \node (H) [red] at (0.4,2.6) {$\mathcal{H}_1$};
        \node [black, circle, fill, inner sep=1.5pt, label={[label distance=0cm]below:$s_1$}] at (1.2,2.078461) {};
        \node [black, circle, fill, inner sep=1.5pt, label={[label distance=0cm]below:$s_2$}] at (1.2,-2.078461) {};
    \end{tikzpicture}}
\hfill
    \def\l{2.4}
    \scalebox{0.8}{\begin{tikzpicture}
        \draw [stealth-stealth, very thick, red] (-2.65,0) -- (2.65,0);
        \draw [ultra thick, blue] (-\l,0) -- (0,4.156922);
        \draw [ultra thick, blue] (0,4.156922) -- (\l,0);
        \draw [dashed] (-\l,0) -- (\l,0);
        \node (U) [blue] at (0.45,4.0) {$U_2$};
        \node (H) [red] at (2.45,-0.3) {$\mathcal{H}_2$};
        \draw [-stealth, blue] (-\l+0.6,1.039230) -- (-\l+0.6-0.433013,1.039230+0.25);
        \draw [-stealth, blue] (\l-0.6,1.039230) -- (\l-0.6+0.433013,1.039230+0.25);
        \draw [-stealth, blue] (-\l+1.2,2.078461) -- (-\l+1.2-0.433013,2.078461+0.25);
        \draw [-stealth, blue] (\l-1.2,2.078461) -- (\l-1.2+0.433013,2.078461+0.25);
        \draw [-stealth, blue] (-\l+1.8,3.117691) -- (-\l+1.8-0.433013,3.117691+0.25);
        \draw [-stealth, blue] (\l-1.8,3.117691) -- (\l-1.8+0.433013,3.117691+0.25);
        \draw [-stealth, blue, dashed] (-\l+2.4,4.106922) -- (-\l+2.4-0.433013,4.106922+0.25);
        \draw [-stealth, blue, dashed] (\l-2.4,4.106922) -- (\l-2.4+0.433013,4.106922+0.25);
        \draw [-stealth, blue, dashed] (0,4.106922) -- (0,4.6);
    \end{tikzpicture}}
\hfill
    \scalebox{1.0}{\begin{tikzpicture}
        \draw [stealth-stealth, very thick, red] (-5.3,-1.5) -- (0,-1.5);
        \node (H) [red] at (-0.2,-1.8) {$\mathcal{H}$};
        \node (U) [blue] at (-0.2,-0.25) {$\mathcal{M}$};
        
        \draw [ultra thick, blue] (0,0) arc (30:150:1);
        \draw [ultra thick, blue] (-1.732051,0) arc (-30:-150:1);
        \draw [ultra thick, blue] (-3.464102,0) arc (30:150:1);
        
        \draw [-stealth, blue] (-0.866025+0.5,-0.5+0.866025) -- (-0.866025+0.5+0.25,-0.5+0.866025+0.433013);
        \draw [-stealth, blue] (-0.866025,-0.5+1) -- (-0.866025,-0.5+1+0.5);
        \draw [-stealth, blue] (-0.866025-0.5,-0.5+0.866025) -- (-0.866025-0.5-0.25,-0.5+0.866025+0.433013);
        \draw [-stealth, blue] (-0.866025-0.866025,-0.5+0.5) -- (-0.866025-0.866025-0.433013,-0.5+0.5+0.25);
        
        \draw [-stealth, blue] (-2.598076+0.5,0.5-0.866025) -- (-2.598076+0.5-0.25,0.5-0.866025+0.433013);
        \draw [-stealth, blue] (-2.598076,0.5-1) -- (-2.598076,0.5-1+0.5);
        \draw [-stealth, blue] (-2.598076-0.5,0.5-0.866025) -- (-2.598076-0.5+0.25,0.5-0.866025+0.433013);
        
        \draw [-stealth, blue] (-4.330127+0.5,-0.5+0.866025) -- (-4.330127+0.5+0.25,-0.5+0.866025+0.433013);
        \draw [-stealth, blue] (-4.330127+0.866025,-0.5+0.5) -- (-4.330127+0.866025+0.433013,-0.5+0.5+0.25);
        \draw [-stealth, blue] (-4.330127,-0.5+1) -- (-4.330127,-0.5+1+0.5);
        \draw [-stealth, blue] (-4.330127-0.5,-0.5+0.866025) -- (-4.330127-0.5-0.25,-0.5+0.866025+0.433013);
        
        \node [black, circle, fill, inner sep=1.5pt, label={[label distance=0cm]left:$s_1$}] at (-5.19615242271,0) {};
        \node [black, circle, fill, inner sep=1.5pt, label={[label distance=0cm]right:$s_2$}] at (0,0) {};
    \end{tikzpicture}}
\caption{Example choices of charts that admit a finite distortion embedding of a 1-dimensional manifold $\M \subset \real^2$. In each case, the normal vectors (blue) of each chart all lie in the same half space.}
\label{fig:normal_vectors}
\end{figure}

Motivated by Example 2, we prove a more general condition for a finite distortion embedding that places requirements on the Gauss map of each chart. The Gauss map gives a mapping from every point on a surface to its corresponding normal vector, which generates an embedding onto the unit sphere. Specifically, the Gauss map $N: X \to \mathcal{S}^{D-1}$ of a surface $X \subset \RR^D$ is a continuous map such that $N(x)$ is a unit vector normal to $X$ at $x$. The map is well-defined for a smooth orientable surface. Our next result characterizes a finite distortion embedding by the existence of a half space that contains the image of the Gauss map $N_i$ for each chart.

\begin{theorem} \label{thm:gauss}
    Let $\mathcal{M}$ be a smooth orientable $(D-1)$-dimensional manifold in $\RR^D$ satisfying $d_\mathcal{M}(u_1,u_2)^2 \leq C\|u_1-u_2\|^2$ (with $C>1$) for all $u_1,u_2 \in \mathcal{M}$, where $d_\mathcal{M}$ is the geodesic distance on $\mathcal{M}$. Suppose each $U_i \subset \mathcal{M}$ is geodesically convex and there exists a unit vector $n_i \in \R^D$ such that $\langle N_i(u), n_i \rangle \geq \delta \geq \sqrt{1-1/C}$ for all $u \in U_i$, where $N_i: U_i \to \mathcal{S}^{D-1}$ is the Gauss map of $U_i$. Then the linear projection $\pi_{\mathcal{H}_i} : U_i \to \mathcal{H}_i$ is bi-Lipschitz, where $\mathcal{H}_i = \{x \in \R^D \, | \, \langle x, n_i \rangle = 0\}$. 
\end{theorem}

\begin{proof}
    See Appendix \ref{app:gauss}.
\end{proof}

Example 2 satisfies the assumptions of Theorem \ref{thm:reach} with three charts, but not two as discussed. However, it satisfies the assumptions of Theorem \ref{thm:gauss} with only two charts, and does indeed admit a finite distortion embedding.

Moreover, the manifold on the right in Figure \ref{fig:normal_vectors} is an example that does not satisfy the conditions of Theorem \ref{thm:reach} with only one chart, since the distance between the labeled points $s_1$ and $s_2$ exceeds $2\tau_\mathcal{M}$. However, we can see that this manifold satisfies the assumptions of Theorem \ref{thm:gauss} and does indeed admit a finite distortion embedding via a single linear projection.

We conjecture that Theorem \ref{thm:gauss} holds under more general conditions. For one, the requirement that $d_\mathcal{M}(u_1,u_2)^2 \leq C\|u_1-u_2\|^2$ appears to be a proof mechanism, and we can likely relax the requirement to $\langle N_i(u), n_i \rangle \geq \delta > 0$. Moreover, we can generalize the Gauss map to non-smooth points by including a notion of `supnormal' vectors. A vector $v$ at a non-differentiable point $x_0$ is \textit{supnormal} if $v^\perp \cap N_\mathcal{M}(x_0) = \{x_0\}$ for a sufficiently small open neighborhood $N_\mathcal{M}(x_0)$ of $x_0$ on $\mathcal{M}$ (as illustrated by the the dashed arrows at the top of the triangle in Figure \ref{fig:normal_vectors}). We believe Theorem \ref{thm:gauss} holds provided that $\langle v, n_i \rangle \geq \delta>0$ for all supnormal vectors $v$ as well.

We also hope to generalize Theorem \ref{thm:gauss} to $d$-dimensional submanifolds of $\RR^D$ via a generalization of the Gauss map. These conjectures and more general conditions are a part of current and future work.

\paragraph{Decoder Complexity} \label{sec:decoder}

The complexity of auto-encoding a manifold generally depends on the topological and geometric properties of the manifold. By choosing charts with simple topology, we can use linear encoders and low-complexity decoders for this task. In particular, each decoder can be approximated by a ReLU network whose complexity depends mainly on the distortion of the decoder function and only weakly on the ambient dimension $D$.

\begin{theorem} \label{thm:decoder_complexity}
    Let $\{(U_i,\phi_i)\}_{i=1}^n$ be a finite distortion embedding of $\M$. Then for each decoder $D_i = \phi_i^{-1}$ and any $\epsilon \in (0,1)$, there is a ReLU network $\hat{D}_i : \real^d \to \real^D$ such that:
    \begin{enumerate}
        \item $\|D_i-\hat{D}_i\|_\infty < \epsilon$;
        \item $\hat{D}_i$ has at most $c\ln(1/\epsilon)$ layers and $c\epsilon^{-d}\ln(1/\epsilon)$ computation units followed by a single matrix multiplication of size $D \times c\epsilon^{-d}$, for some constant $c=c(d)$.
    \end{enumerate}
\end{theorem}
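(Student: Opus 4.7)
The plan is to exploit the finite distortion assumption directly: each decoder $D_i = \phi_i^{-1}$ is $L$-Lipschitz from a bounded subset of $\real^d$ into $\real^D$, where $L$ depends only on the bi-Lipschitz constants of $\phi_i$. The key structural idea is to make the $\real^D$ dependence enter the construction only through a final linear read-off, by having the ReLU portion of the network compute a shared set of $d$-variate basis functions on the latent space and then use the matrix $A$ in item (2) to evaluate all $D$ ambient coordinates at once.

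First I would apply an affine reduction to the unit cube: after a fixed rescaling that can be absorbed into the first layer, assume $\phi_i(U_i) \subset [0,1]^d$. Next, place a uniform grid $\{x_k\}_{k=1}^{N}$ in $[0,1]^d$ with spacing $h = \Theta(\epsilon/L)$ and $N = O(\epsilon^{-d})$, and let $\psi_k : [0,1]^d \to [0,1]$ be the standard tensor-product multilinear hat function supported on the cell around $x_k$. These form a partition of unity with $O(1)$ overlap at each point, and the Lipschitz property of $D_i$ gives
\begin{equation*}
    \left\| D_i(x) - \sum_{k=1}^{N} D_i(x_k)\,\psi_k(x) \right\|_\infty \;\leq\; L h \;\leq\; \epsilon/2.
\end{equation*}

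Each $\psi_k$ is a product of at most $d$ univariate hat functions. Using the Yarotsky-style ReLU approximation of the scalar product, each $\psi_k$ can be approximated in sup-norm to accuracy $\epsilon/(2\|A\|_\infty N)$ by a ReLU subnetwork of depth $O(\log(1/\epsilon))$ and $O(\log(1/\epsilon))$ units. Since the univariate hats depend on a single coordinate, their computations can be shared across all $N$ basis functions; arranging the $N$ approximations $\hat\psi_k$ in parallel and stacking them into a vector $\hat\psi(x) \in \real^N$ yields a ReLU network of depth $O(\log(1/\epsilon))$ with total computational units $O(N\log(1/\epsilon)) = O(\epsilon^{-d}\log(1/\epsilon))$. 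The final decoder is then $\hat{D}_i(x) = A\,\hat\psi(x)$ with $A = [D_i(x_1) \,|\, \cdots \,|\, D_i(x_N)] \in \real^{D \times N}$, which contributes exactly the claimed final matrix multiplication of size $D \times c\epsilon^{-d}$.

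The error bound follows by the triangle inequality: the grid/Lipschitz term is $\le \epsilon/2$, and the basis-approximation term $\|A(\psi - \hat\psi)\|_\infty$ stays $O(\epsilon)$ because $D_i$ is bounded on $U_i$ and, thanks to the constant-overlap property of $\{\psi_k\}$, errors accumulate only over an $O(1)$ number of active basis functions rather than all $N$ of them. The main obstacle I expect is delivering the depth bound $O(\log(1/\epsilon))$ while simultaneously keeping the unit count at $O(\epsilon^{-d}\log(1/\epsilon))$: this requires parallelizing the Yarotsky product-tree so that its depth is reused across the $N$ basis functions, and exploiting the bounded overlap of the partition of unity so that the per-cell error bound is not inflated by a factor of $N$. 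Both ingredients are standard in ReLU approximation theory, but they must be combined carefully so that the ambient dimension $D$ appears only in the final linear readout and the constant $c$ depends on $d$ alone.
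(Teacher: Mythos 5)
Your proposal is correct and follows essentially the same route as the paper's proof: a uniform grid of $O(\epsilon^{-d})$ points on $[0,1]^d$, a tensor-product piecewise-linear partition of unity, the sum $\sum_k D_i(x_k)\psi_k(x)$ bounded via Lipschitz continuity, Yarotsky-style ReLU multiplication networks to realize each $d$-fold product with depth and size $O(\log(1/\epsilon))$, and a final $D\times N$ matrix whose columns are the grid-point values $D_i(x_k)$. The paper uses Yarotsky's trapezoidal bump rather than a strict multilinear hat and bounds the grid error via the $2^d$-bounded overlap rather than exact partition-of-unity normalization, but these are cosmetic differences; the decomposition, the role of the final linear readout in isolating the $D$-dependence, and the complexity accounting all coincide with your argument.
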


\begin{proof}
    See Appendix \ref{app:decoder_complexity}.
\end{proof}

Theorem \ref{thm:decoder_complexity} shows that we can approximate each decoder with a low-complexity ReLU network. In particular, all layers except for the final one are independent of the manifold data and its ambient dimension; they only serve to make a fine enough partition of the latent space. The final matrix multiplication consists of a finite number of decoded points to embed the data into $\real^D$. This network architecture is similar to a multidimensional version of the construction presented in \cite{yarotsky2017error}. 

In practice, we cannot directly implement the network $\hat{D}_i$ in Theorem \ref{thm:decoder_complexity} because we do not have access to the true decoder $D_i$, which determines the weights of the final layer. Our goal now is to approximate each decoder $D_i: [0,1]^d \to \real^D$ given noisy samples $\{(z_j,x_j)\}_{j=1}^n$. Given this sample, we approximate $D_i$ using a sum-product of a partition of unity with constant terms of vector-valued local polynomial regression functions. The result is an approximator that can be implemented by a ReLU network with weights dependent on known sample data, and whose convergence rate is independent of $D$. Combining the local polynomial regression convergence rates from \cite{aizenbud2021regression} with a fine enough partition of unity gives the following result, the proof of which is presented in Appendix \ref{app:decoder_statistical} along with the sampling assumptions on $\{(z_j,x_j)\}_{j=1}^n$.

\begin{theorem} \label{thm:decoder_statistical}
    Let $\{(U_i,\phi_i)\}_{i=1}^n$ be a finite distortion embedding of $\mathcal{M}$. Suppose each true decoder $D_i = \phi_i^{-1} \in C^k([0,1]^d)$ is $L_i$-Lipschitz. Under mild assumptions on a (possibly noisy) sample $\{(z_j,x_j)\}_{j=1}^n$, for any $\epsilon>0$ there are $C$ and $n$ large enough so that
    \begin{equation*}
        P\Big( \|D_i - \hat{D}_{i,n}\|_\infty > 2^d \cdot \big(Cn^{-\frac{k}{2k+d}} + L_i\sqrt{d} N^{-1}\big) \Big) \leq \epsilon,
    \end{equation*}
    where $N=\mathcal{O}\big(n^{\frac{1}{2k+d}}\big)$, 
    and $\hat{D}_{i,n}$ is computed via local polynomial regression at various grid points in the latent space.
\end{theorem}

\begin{proof}
    See Appendix \ref{app:decoder_statistical}.
\end{proof}

Given a large enough sample size $n$, Theorem \ref{thm:decoder_statistical} implies that we can approximate the decoder $D_i$ to a certain degree of accuracy with high probability, using just the sample data. We can then in turn approximate this local regression function with a ReLU network by approximating a partition of unity of the latent space, 
yielding a network that approximates $D_i$ and can be implemented in practice and trained via gradient descent.

\section{Numerical Experiments}\label{sec:numresults}
This section presents numerical experiments on synthetic and real-world data in supervised, semi-supervised, and unsupervised settings.  Additional ablation studies, along with complete model descriptions and training parameters are provided in the supplementary material.

\subsection{Unsupervised Geometric Examples.}
We begin with illustrative synthetic experiments that demonstrate the efficacy of learning manifolds. Figure \ref{fig:trg} illustrates learning two disconnected, non-smooth manifolds with non-trivial topology using a 5 chart model with very small sub networks. The data set is composed for 200 training points uniformly randomly selected from a pair of adjacent manifolds triangle. After training, each chart only covers part of one of the triangles, while their union covers the entire training set. Once the model is trained, we can cluster the points into two sets representing the two triangles forming a chart confusion matrix as shown in the left picture in Figure \ref{fig:trg}. To form this matrix, we uniformly sample the latent space of each chart, then measure the chart prediction activation of the other charts on this synthetic data. From the confusion matrix, we see that charts 1 and 2 overlap to cover one connected component and charts 3,4 and 5 cover the other component. This test demonstrates that our model can successfully handle manifolds with disconnected components and separate those components without the use of supervised labels.

\begin{figure}[h]
  \includegraphics[width=0.55\linewidth]{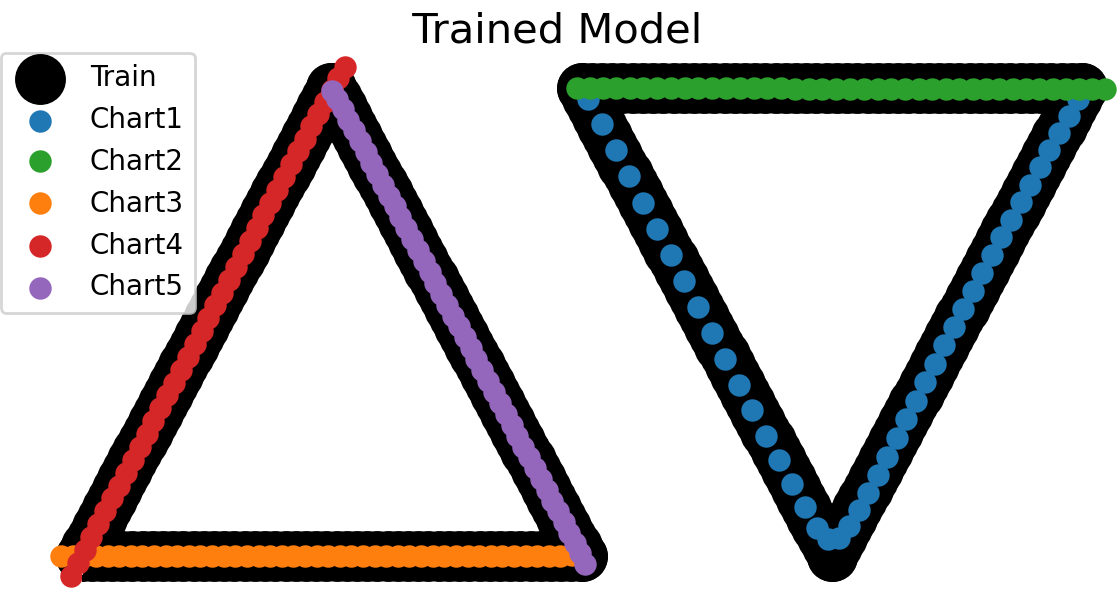}
  \hfill
  \includegraphics[width=0.36\linewidth]{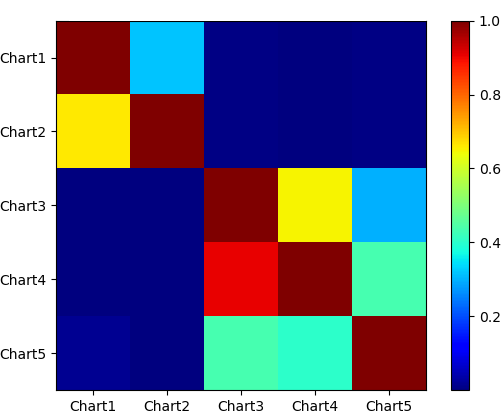}
  \caption{\textbf{Left}: Triangle training and reconstruction. \textbf{Right}: Chart overlaps via confusion clustering.}
  \label{fig:trg}
\end{figure}

\begin{figure}[h]
\centering
  \includegraphics[width=.95\linewidth]{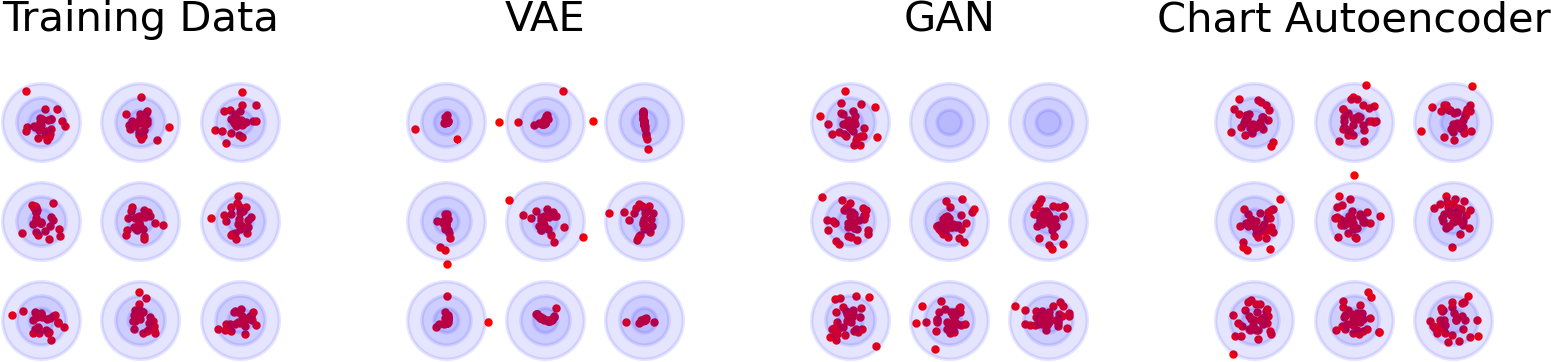}
  \caption{ \textbf{Left}: Sample of the training data,  \textbf{Second}: Sampling from trained VAE, note that the decoder connects the distributions since they are all covered by a model with continuous layers build on a continuous latent space. \textbf{Third}: Sampling form a GAN model, note the mode collapse. \textbf{Right}: CAE model which successfully covers and separates each distribution}
   \label{fig:gauss}   
\end{figure}

Next we demonstrate our methods resistance to the mode collapse phenomena while working in disconected domains. The example in Figure \ref{fig:gauss} shows a model with 2D latent spaces trained on data drawn from 9 disconnected Gaussian distributions. We initialize with 12 charts and use the chart removal procedure \cite{schonsheck2019chart} during training to reduce the number needed in the final representation to 9. We observe that the charts are disconnected and that the model can appropriately generate novel data by sampling the learned distribution. We emphasize that these samples are taken directly from the model and that there is no rejection sampling scheme in use. As a comparison, it is clear to see that conventional VAEs will generate data in between the disconnected components since their latent space is a connected Euclidean domain. The GAN model, does a better job at representing the disconnected nature of this dataset, but fails to cover each of the distributions. Conditional Autoencoders (\cite{van2016conditional}) cannot be applied to this directly to case since the points are unlabeled. Our method performs extremely well for generating data sampled from disconnected distributions.

\subsection{Supervised and Semi-Supervised Geometric Examples}

\begin{figure}[h]
\centering
      \includegraphics[width=.9\linewidth]{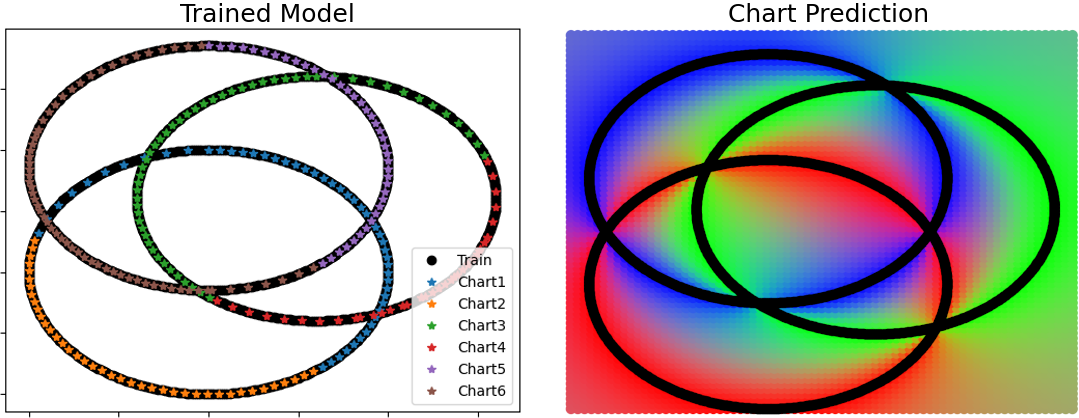}
      \caption{\textbf{Left} Overlapping circle training data and reconstruction. \textbf{Right} Chart prediction evaluated on the plane.}
      \label{fig:circles}
\end{figure}

\begin{table}
    \centering
    \resizebox{.95\columnwidth}{!}{%
\begin{tabular}{c|c|ccc}
\hline
Model                                                                                                             & \% Labeled          & Reconsturction            & Prediction             & Generation         \\ \hline
\multirow{2}{*}{\begin{tabular}[c]{@{}c@{}}Variational   Autoencoder \\      (latent dimension = 2)\end{tabular}} & \multirow{2}{*}{NA} & \multirow{2}{*}{3.12E-04} & \multirow{2}{*}{81.23} & \multirow{2}{*}{-} \\
                                                                                                                  &                     &                           &                        &                    \\ \hline
\multirow{3}{*}{\begin{tabular}[c]{@{}c@{}}Conditional   Autoencoder \\      (latent dimension = 2)\end{tabular}} & 10                  & 4.35E-03                  & 85.55                  & 85.45              \\
                                                                                                                  & 50                  & 7.36E-04                  & 94.29                  & 93.19              \\
                                                                                                                  & 100                 & 2.33E-04                  & 97.33                  & 96.23              \\ \hline
\multirow{3}{*}{\begin{tabular}[c]{@{}c@{}}Divide   and Conquer\\      (latent dimension  = 1)\end{tabular}}      & 10                  & 4.35E-04                  & 90.45                  & 89.35              \\
                                                                                                                  & 50                  & 7.73E-04                  & 93.29                  & 93.19              \\
                                                                                                                  & 100                 & 2.33E-04                  & 98.33                  & 97.23              \\ \hline
\multirow{3}{*}{\begin{tabular}[c]{@{}c@{}}Chart   Autoencoder\\      (latent dimension  = 1)\end{tabular}}       & 10                  & 2.13E-04                  & 94.13                  & 93.46              \\
                                                                                                                  & 50                  & 2.14E-04                  & 96.81                  & 95.28              \\
                                                                                                                  & 100                 & 2.13E-04                  & 99.32                  & 99.12              \\ \hline
\end{tabular}
    }
    \caption{Performance of models on overlapping circles dataset under varying levels of supervision}
    \label{table:circles}
\end{table}

Next, we show how functional information, such as class labels, can be used in conjunction with manifold learning to handle the case of overlapping manifolds. In Figure \ref{fig:circles}, the dataset is made of 500 points samples randomly from three mutually overlapping circles, each labelled with different class. By using a multi-chart model with a one-dimensional latent space and constant (but learnable) latent functions, we successfully separate the data into three classes, each covered by two charts. We compare these results with a tradition variational autoencoder and conditional autoencoder, both of which requires a 2-dimensional latent space (further tests with other latent spaces are considered in Supplementary \ref{apdx:additional}). Since the conditional autoencoder requires labels, we only train it on the labeled data in the abolition study presented in Table~\ref{table:circles}. Reconstruction measures the $l_2$-reconstruction error of points, prediction measures the classification accuracy (a task which the VAE and conditional autoencoder cannot conduct due to lack of function learning part) and finally generation measures the percent of generated points which belong to the desired class. It is clear to see that our model can produce very good results even with $10\%$ of labels.

\subsection{Coil-10 Dataset}

Next, we present results using a semi-supervised scheme on a real-world dataset. The Coil-10 dataset \cite{nene1996columbia, mishne2019diffusion} contains images of 10 physical objects rotating along one axis in 3D. This dataset can be modeled as 10 circles embedded in a high dimension ambient space. The difficulty in properly representing this dataset is that the minimum pixel distance between some of the classes is smaller than the maximum pixel distance within the class. That is, the circles are closer to each other than the diameter of the circle. Variation autoencoders, GANs and the original chart autoencoder all fail to properly represent the disconnected nature of this data-set, making conditional generation difficult. Divide-and-conquer methods struggle since the separation between the classes is very small. However, using only ten percent of the labels in a semi-supervised regime, we are able to successfully disentangle the dataset into 10 manifolds, each represented with two charts. As results, interpolation shown in Figure \ref{fig:Coil-10} illustrates that the proposed method successfully capture the overlapped data manifold structure. In table \ref{table:coil} we summarise the numerical results for several models using the metrics discussed in the previous section. Again, we observe that the chart-based models out preform the standard models, while also using fewer parameters.

\begin{table}[]
\centering
    \resizebox{.95\columnwidth}{!}{%
\begin{tabular}{c|c|c|ccc}
\hline
Method                    & \# of Charts          & Latent Dim & Reconstruction & Prediction & Generation \\ \hline
\multirow{2}{*}{Variational Autoencoder}      & \multirow{2}{*}{1} & 1          & 2.16E-01       & -                               & -          \\
                          &                    & 25         & 5.86E-03       & -                               & -          \\ \hline
\multirow{2}{*}{Conditional Autoencoder} & \multirow{2}{*}{1} & 1          & 9.89E-01       & -                               & 12.55      \\
                          &                    & 25         & 4.38E-03       & -                               & 88.32      \\ \hline
Divide and Conquer                       & 20                 & 1          & 2.11E-03       & 97.65                           & 95.46      \\ \hline
Chart Autoencoder                       & 20                 & 1          & 2.02E-03       & 98.36                           & 98.33      \\ \hline
\end{tabular}
}
\caption{Errors on Coil-10}
\label{table:coil}
\end{table}

In a second experiment on this dataset, we focus on one class and train latent functions to approximate the angle of the object's rotation. We test this using linear, 2-layer MLP and small ConvNet encoders. We repeat this experiment using linear and 3-layer MLP latent function models. The more complex latent models do a better job of predicting the angle since they are better able to counteract the distortion of the linear encoder. Finally, in table \ref{table:angleerror} we compare these results to similar networks trained on latent representations of a traditional auto-encoder and variational autoencoder and find ours favorable.

\begin{figure}[h]
  \includegraphics[width=\linewidth]{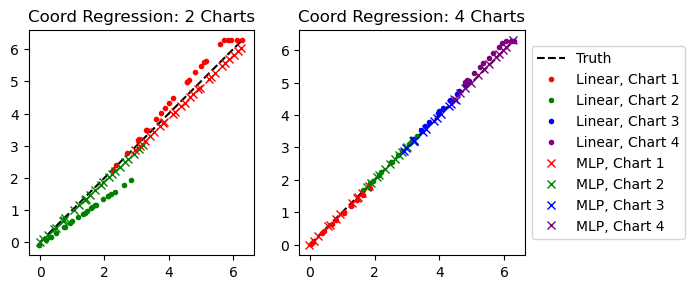}
  \caption{Regression of Angles in Coil with 2 and 4 chart models using linear and MLP latent function modules}
\end{figure}

\begin{table}
\centering
    \resizebox{.99\columnwidth}{!}{%
\begin{tabular}{c|c|ccccc}
\hline
Encoder   Model              & Latent Model & AE       & VAE      & Cond AE  & CAE-2    & CAE-4    \\ \hline
\multirow{2}{*}{Linear}      & Linear       & 1.57E+00 & 1.57E+00 & 8.18E-01 & 6.37E-02 & 6.54E-02 \\
                             & MLP          & 1.42E+00 & 1.32E+00 & 6.85E-01 & 4.73E-02 & 1.44E-02 \\ \hline
\multirow{2}{*}{2-Lyaer MLP} & Linear       & 7.71E-01 & 6.71E-01 & 3.49E-01 & 2.41E-02 & 1.91E-02 \\
                             & MLP          & 5.32E-01 & 3.32E-01 & 1.69E-01 & 4.32E-03 & 4.53E-03 \\ \hline
\multirow{2}{*}{Conv. Net}   & Linear       & 5.31E-02 & 2.14E-02 & 1.65E-02 & 9.55E-03 & 6.71E-03 \\
                             & MLP          & 9.16E-03 & 7.14E-03 & 5.15E-03 & 1.15E-03 & 1.32E-03 \\ \hline
\end{tabular}
    }
\caption{$L_1$ error for Coil-10 angle regression}
\label{table:angleerror}
\end{table}

\begin{figure}[ht]
    \centering
    \includegraphics[width=.70\linewidth]{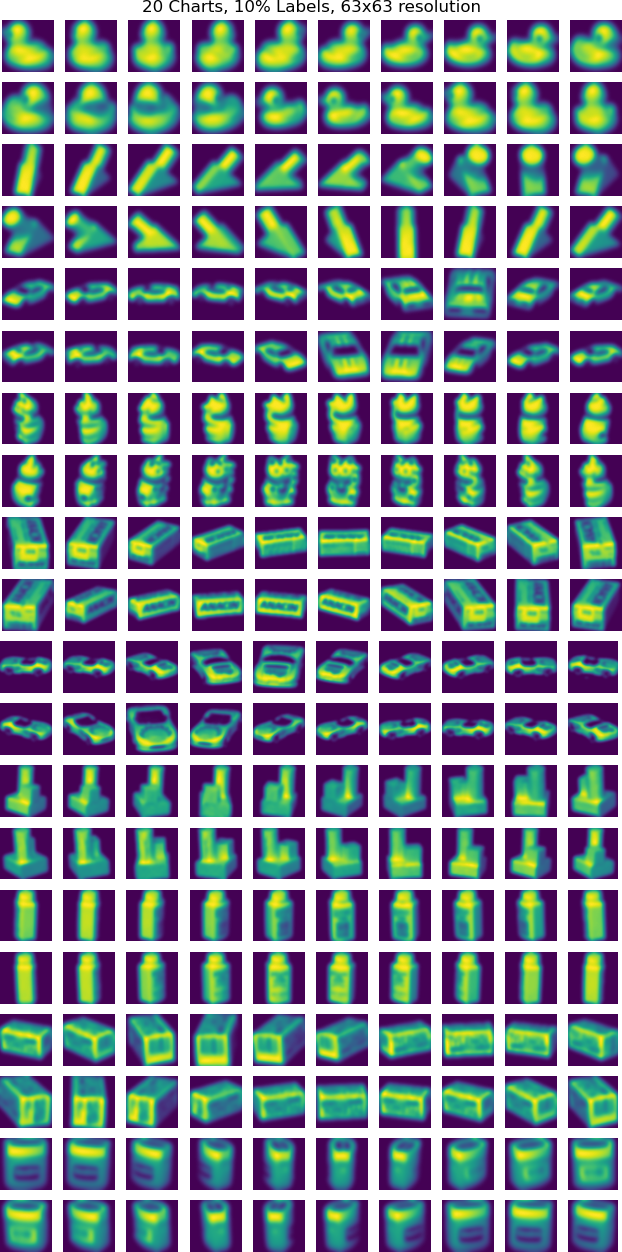}
    \caption{Sampling the latent space of a 20-chart CAE on Coil-10. Each row represents a single chart and charts are arranged by class}
    \label{fig:Coil-10}
\end{figure}

\subsection{Molecular Dynamics}

Finally, we show how chart autoencoders can be used in a real-world contex by modeling of molecular dynamics. The rMD17 dataset \cite{christensen2020role} contains hundreds of thousands of scans of 17 different molecules under various external conditions. Specifically, the data contains position of the atoms within a single molecule as the confirmation (i.e. pose) of the molecule changes due to inter atomic forces acting the molecule.  Since the entire dataset is not independently sampled (since the underlying sources is time series data), it also contains five standardized train-test splits of 1000 molecules each for property prediction which we employ in this study. 

Each molecular scan can be represented as a $N_{atom} \times 3$ matrix where each row represents the XYZ coordinate of an atom. We represent this observation as a vector $x_i \in \RR^{3 N_{atom}}$. However, it is well known that, even thought the $x_i$ can take on any value, the relative positions of the atoms with the molecule lie near a low-dimensional \textit{slow manifold} of dimension 2  \cite{koelle2022manifold, zhang2023dictionary}. Moreover, it is also known that the manifold is topologically toroidal, and cannot be isometircally embedded in less than four dimensions \cite{zhang2023dictionary}--that is, the slow manifold is well modeled by a flat torus. In this experiment, we are not only interest in modeling the position of the atoms, but also the the energy (measured in $\RR$) and molecular forces (measured as a vector in $\RR^3$). Each of these properties can be modeled as a function defined on the manifold of the coordinate positions. 

We compare our results with several with several competing models; a variation autoencoder (VAE \cite{kingma2013auto}, a conditional autoencoder (Cond AE) \cite{van2016conditional}, and a divide and conquer (D\&C) approach proposed \cite{floryan2022data}. For each method we employ two latent models. First, a simple linear regression, and second a relatively small 3-layer neural network (NN). We report additional network architectural details and training procedure in the appendix. Table \ref{table:Ethanol} reports the reconstruction of the test set coordinates and prediction of the molecular energy and forces for several latent space configurations. 

Several comments are in order. First, the methods which only employ a single chart (traditional and conditional auto encoders) achieve only lackluster results on the property prediction problem when the latent dimension is less than 4, the minimum dimension needed to isometrically embed the slow manifold even thought they have relatively good coordinate reconstruction scores. When the latent dimenions is 4, both the linear regressor and neural network based approach achieve good results. Overall the chart-based approaches (D\&C and CAE) achieve better results using only two dimensional latent spaces. Theoretically, the torus is homeomorphic to a two dimensional latent space (with periodic boundaries), however, both the divide and conqueror and CAE approaches fail to accurately model the dynamics when constrained to one or two charts. The failure of the D\&C method can be attributed to the fact that their KNN model for chart prediction does not guarantee that the charts are homeomphic to the latent space. Similarly, the CAE model struggles here because the finite distortion cannot be upheld with four or fewer charts. For both, increasing the number of charts alleviates these problems, and with 6 charts, both model outperforms the models with euclidean latent spaces. 

\begin{table}[]
\begin{tabular}{cccclccc}
\hline
\multicolumn{8}{c}{\textbf{Ethanol}}                                                                                                                                                                                                                                                                                                                                     \\ \hline
\multicolumn{1}{l|}{Model}                                                                        & \multicolumn{1}{l|}{Latent}              & \multicolumn{1}{l|}{\# of Charts}        & \multicolumn{1}{l|}{Latent Model} & \multicolumn{1}{l|}{\# of Parm}                   & \multicolumn{1}{c|}{Recon.} & \multicolumn{1}{l|}{Energy} & \multicolumn{1}{l}{Forces} \\ \hline
\multicolumn{1}{c|}{}                                                                             & \multicolumn{1}{c|}{}                    & \multicolumn{1}{c|}{}                    & \multicolumn{1}{c|}{Linear}       & \multicolumn{1}{l|}{7417}                         & 0.319914                    & 0.7809                      & 1.0781                     \\
\multicolumn{1}{c|}{}                                                                             & \multicolumn{1}{c|}{\multirow{-2}{*}{2}} & \multicolumn{1}{c|}{}                    & \multicolumn{1}{c|}{NN}           & \multicolumn{1}{l|}{9561}                         & 0.376983                    & 0.6531                      & 0.480863                   \\ \cline{2-2} \cline{4-8} 
\multicolumn{1}{c|}{}                                                                             & \multicolumn{1}{c|}{}                    & \multicolumn{1}{c|}{}                    & \multicolumn{1}{c|}{Linear}       & \multicolumn{1}{l|}{16342}                        & 0.283381                    & 0.7733                      & 0.8733                     \\
\multicolumn{1}{c|}{}                                                                             & \multicolumn{1}{c|}{\multirow{-2}{*}{3}} & \multicolumn{1}{c|}{}                    & \multicolumn{1}{c|}{NN}           & \multicolumn{1}{l|}{18486}                        & 0.309296                    & 0.6319                      & 0.371159                   \\ \cline{2-2} \cline{4-8} 
\multicolumn{1}{c|}{}                                                                             & \multicolumn{1}{c|}{}                    & \multicolumn{1}{c|}{}                    & \multicolumn{1}{c|}{Linear}       & \multicolumn{1}{l|}{23342}                        & 0.253628                    & 0.0191                      & 0.0591                     \\
\multicolumn{1}{c|}{\multirow{-6}{*}{VAE}}                                                        & \multicolumn{1}{c|}{\multirow{-2}{*}{4}} & \multicolumn{1}{c|}{\multirow{-6}{*}{1}} & \multicolumn{1}{c|}{NN}           & \multicolumn{1}{l|}{25486}                        & 0.241641                    & 0.0123                      & 0.173903                   \\ \hline
\multicolumn{1}{c|}{}                                                                             & \multicolumn{1}{c|}{}                    & \multicolumn{1}{c|}{1}                   & \multicolumn{1}{c|}{Linear}       & \multicolumn{1}{l|}{7417}                         & 0.29013                     & 0.735411                    & 0.95612                    \\
\multicolumn{1}{c|}{}                                                                             & \multicolumn{1}{c|}{\multirow{-2}{*}{2}} & \multicolumn{1}{c|}{}                    & \multicolumn{1}{c|}{NN}           & \multicolumn{1}{l|}{9561}                         & 0.365567                    & 0.660253                    & 0.474884                   \\ \cline{2-2} \cline{4-8} 
\multicolumn{1}{c|}{}                                                                             & \multicolumn{1}{c|}{}                    & \multicolumn{1}{c|}{}                    & \multicolumn{1}{c|}{Linear}       & \multicolumn{1}{l|}{16342}                        & 0.262462                    & 0.775172                    & 0.882703                   \\
\multicolumn{1}{c|}{}                                                                             & \multicolumn{1}{c|}{\multirow{-2}{*}{3}} & \multicolumn{1}{c|}{}                    & \multicolumn{1}{c|}{NN}           & \multicolumn{1}{l|}{18486}                        & 0.28925                     & 0.642878                    & 0.361031                   \\ \cline{2-2} \cline{4-8} 
\multicolumn{1}{c|}{}                                                                             & \multicolumn{1}{c|}{}                    & \multicolumn{1}{c|}{}                    & \multicolumn{1}{c|}{Linear}       & \multicolumn{1}{l|}{23342}                        & 0.230159                    & 0.019287                    & 0.060499                   \\
\multicolumn{1}{c|}{\multirow{-6}{*}{\begin{tabular}[c]{@{}c@{}}Cond.   \\      AE\end{tabular}}} & \multicolumn{1}{c|}{\multirow{-2}{*}{4}} & \multicolumn{1}{c|}{\multirow{-5}{*}{}}  & \multicolumn{1}{c|}{NN}           & \multicolumn{1}{l|}{25486}                        & 0.241433                    & 0.011554                    & 0.163288                   \\ \hline
\multicolumn{1}{c|}{}                                                                             & \multicolumn{1}{c|}{}                    & \multicolumn{1}{c|}{}                    & \multicolumn{1}{c|}{Linear}       & \multicolumn{1}{l|}{6824}                         & 0.283381                    & 0.716232                    & 0.993962                   \\
\multicolumn{1}{c|}{}                                                                             & \multicolumn{1}{c|}{}                    & \multicolumn{1}{c|}{\multirow{-2}{*}{2}} & \multicolumn{1}{c|}{NN}           & \multicolumn{1}{l|}{7944}                         & 0.309296                    & 0.631924                    & 0.455209                   \\ \cline{3-8} 
\multicolumn{1}{c|}{}                                                                             & \multicolumn{1}{c|}{}                    & \multicolumn{1}{c|}{}                    & \multicolumn{1}{c|}{Linear}       & \multicolumn{1}{l|}{13602}                        & 0.274378                    & 0.0373                      & 0.1119                     \\
\multicolumn{1}{c|}{}                                                                             & \multicolumn{1}{c|}{}                    & \multicolumn{1}{c|}{\multirow{-2}{*}{4}} & \multicolumn{1}{c|}{NN}           & \multicolumn{1}{l|}{15896}                        & 0.269803                    & 0.0235                      & 0.061145                   \\ \cline{3-8} 
\multicolumn{1}{c|}{}                                                                             & \multicolumn{1}{c|}{}                    & \multicolumn{1}{c|}{}                    & \multicolumn{1}{c|}{Linear}       & \multicolumn{1}{l|}{20550}                        & 0.294648                    & 0.0213                      & 0.0639                     \\
\multicolumn{1}{c|}{\multirow{-6}{*}{D+C}}                                                        & \multicolumn{1}{c|}{\multirow{-6}{*}{2}} & \multicolumn{1}{c|}{\multirow{-2}{*}{6}} & \multicolumn{1}{c|}{NN}           & \multicolumn{1}{l|}{23856}                        & 0.289832                    & 0.0195                      & 0.090632                   \\ \hline
\multicolumn{1}{c|}{}                                                                             & \multicolumn{1}{c|}{}                    & \multicolumn{1}{c|}{}                    & \multicolumn{1}{c|}{Linear}       & \multicolumn{1}{l|}{6878}                         & 0.274378                    & 0.0373                      & 0.1119                     \\
\multicolumn{1}{c|}{}                                                                             & \multicolumn{1}{c|}{}                    & \multicolumn{1}{c|}{\multirow{-2}{*}{2}} & \multicolumn{1}{c|}{NN}           & \multicolumn{1}{l|}{7998}                         & 0.269803                    & 0.0235                      & 0.061145                   \\ \cline{3-8} 
\multicolumn{1}{c|}{}                                                                             & \multicolumn{1}{c|}{}                    & \multicolumn{1}{c|}{}                    & \multicolumn{1}{c|}{Linear}       & \multicolumn{1}{l|}{13764}                        & 0.294648                    & 0.0213                      & 0.0639                     \\
\multicolumn{1}{c|}{}                                                                             & \multicolumn{1}{c|}{}                    & \multicolumn{1}{c|}{\multirow{-2}{*}{4}} & \multicolumn{1}{c|}{NN}           & \multicolumn{1}{l|}{{\color[HTML]{000000} 16004}} & 0.289832                    & 0.0195                      & 0.090632                   \\ \cline{3-8} 
\multicolumn{1}{c|}{}                                                                             & \multicolumn{1}{c|}{}                    & \multicolumn{1}{c|}{}                    & \multicolumn{1}{c|}{Linear}       & \multicolumn{1}{l|}{20658}                        & 0.227143                    & 0.0183                      & 0.05673                    \\
\multicolumn{1}{c|}{\multirow{-6}{*}{CAE}}                                                        & \multicolumn{1}{c|}{\multirow{-6}{*}{2}} & \multicolumn{1}{c|}{\multirow{-2}{*}{6}} & \multicolumn{1}{c|}{NN}           & \multicolumn{1}{l|}{24018}                        & 0.211678                    & 0.0079                      & 0.011169                   \\ \hline
\end{tabular}
\caption{Molecular dynamics experiment results for ethanol dataset}
\label{table:Ethanol}
\end{table}

\section{Conclusion}

In this work, we have proposed a chart auto-encoder that can incorporate additional information such as class labels to detect hidden low-dimensional manifold structures in a semi-supervised manner. The proposed asymmetric encoding-decoding process enjoys a very low complexity encoder, such as local linear projection. Meanwhile, it allows us to learn representations for manifolds with highly nontrivial topology, including manifolds with holes and disconnected manifolds. In addition, it can successfully differentiate nearby but disjoint manifolds and intersecting manifolds with only a small amount of supervision. Theoretically, we show that such approximations can be accurately conducted with neural networks of width and depth that scale with the intrinsic dimension of the data and only a weak dependence on the ambient dimension. 
Our numerical experiments on synthetic and real-world data demonstrate the effectiveness of the proposed method. Future work may involve approximating more complicated processes, such as dynamical systems on manifolds, or adopting this chart auto-encoder to other generative models, such as adversarial networks or diffusion networks. 

\section{Acknowledgements}
R. Lai's reserach is supported in part by NSF DMS-2401297.  A. Cloninger's research is supported in part by NSF DMS-2012266, CISE-2403452, and a gift from Intel.

\bibliographystyle{plain} 
\bibliography{refs_jmlr}

\begin{thebibliography}{10}

\bibitem{aamari2019estimating}
Eddie Aamari, Jisu Kim, Fr{\'e}d{\'e}ric Chazal, Bertrand Michel, Alessandro
  Rinaldo, and Larry Wasserman.
\newblock Estimating the reach of a manifold.
\newblock {\em Electronic journal of statistics}, 13(1):1359--1399, 2019.

\bibitem{aizenbud2021regression}
Yariv Aizenbud and Barak Sober.
\newblock Convergence rates of vector-valued local polynomial regression.
\newblock {\em arXiv preprint arXiv:2107.05852}, 2021.

\bibitem{aizenbud2021non}
Yariv Aizenbud and Barak Sober.
\newblock Non-parametric estimation of manifolds from noisy data.
\newblock {\em arXiv preprint arXiv:2105.04754}, 2021.

\bibitem{an2015variational}
Jinwon An and Sungzoon Cho.
\newblock Variational autoencoder based anomaly detection using reconstruction
  probability.
\newblock {\em Special Lecture on IE}, 2(1):1--18, 2015.

\bibitem{azadi2018discriminator}
Samaneh Azadi, Catherine Olsson, Trevor Darrell, Ian Goodfellow, and Augustus
  Odena.
\newblock Discriminator rejection sampling.
\newblock In {\em International Conference on Learning Representations}, 2019.

\bibitem{Belkin2003}
Mikhail Belkin and Partha Niyogi.
\newblock Laplacian eigenmaps for dimensionality reduction and data
  representation.
\newblock {\em Neural Computation}, 15(6):1373--1396, 2003.

\bibitem{bengio2013representation}
Yoshua Bengio, Aaron Courville, and Pascal Vincent.
\newblock Representation learning: a review and new perspectives.
\newblock {\em IEEE Transactions on Pattern Analysis and Machine Intelligence},
  35(8):1798--1828, 2013.

\bibitem{bengio2015deep}
Yoshua Bengio, Ian~J Goodfellow, and Aaron Courville.
\newblock Deep learning.
\newblock {\em Nature}, 521:436--444, 2015.

\bibitem{bourlard1988auto}
Herv{\'e} Bourlard and Yves Kamp.
\newblock Auto-association by multilayer perceptrons and singular value
  decomposition.
\newblock {\em Biological Cybernetics}, 59(4-5):291--294, 1988.

\bibitem{chen2019efficient}
Minshuo Chen, Haoming Jiang, Wenjing Liao, and Tuo Zhao.
\newblock Efficient approximation of deep relu networks for functions on low
  dimensional manifolds.
\newblock In {\em Advances in Neural Information Processing Systems}, pages
  8172--8182, 2019.

\bibitem{christensen2020role}
Anders~S Christensen and O~Anatole Von~Lilienfeld.
\newblock On the role of gradients for machine learning of molecular energies
  and forces.
\newblock {\em Machine Learning: Science and Technology}, 1(4):045018, 2020.

\bibitem{cloninger2021deep}
Alexander Cloninger and Timo Klock.
\newblock A deep network construction that adapts to intrinsic dimensionality
  beyond the domain.
\newblock {\em Neural Networks}, 141:404--419, 2021.

\bibitem{connor2019representing}
Marissa Connor and Christopher Rozell.
\newblock Representing closed transformation paths in encoded network latent
  space.
\newblock {\em arXiv preprint arXiv:1912.02644}, 2019.

\bibitem{cox2001}
T.F. Cox and M.A.A. Cox.
\newblock {\em Multidimensional scaling}.
\newblock Chapman and Hall, 2001.

\bibitem{csaji2001approximation}
Bal{\'a}zs~Csan{\'a}d Cs{\'a}ji et~al.
\newblock Approximation with artificial neural networks.
\newblock {\em Faculty of Sciences, Etvs Lornd University, Hungary}, 24(48):7,
  2001.

\bibitem{davidson2018hyperspherical}
Tim~R Davidson, Luca Falorsi, Nicola De~Cao, Thomas Kipf, and Jakub~M Tomczak.
\newblock Hyperspherical variational auto-encoders.
\newblock {\em arXiv preprint arXiv:1804.00891}, 2018.

\bibitem{donati2020deep}
Nicolas Donati, Abhishek Sharma, and Maks Ovsjanikov.
\newblock Deep geometric functional maps: Robust feature learning for shape
  correspondence.
\newblock In {\em Proceedings of the IEEE/CVF Conference on Computer Vision and
  Pattern Recognition}, pages 8592--8601, 2020.

\bibitem{falorsi2018explorations}
Luca Falorsi, Pim de~Haan, Tim~R Davidson, Nicola De~Cao, Maurice Weiler,
  Patrick Forr{\'e}, and Taco~S Cohen.
\newblock Explorations in homeomorphic variational auto-encoding.
\newblock {\em arXiv preprint arXiv:1807.04689}, 2018.

\bibitem{federer1959curvature}
Herbert Federer.
\newblock Curvature measures.
\newblock {\em Transactions of the American Mathematical Society},
  93(3):418--491, 1959.

\bibitem{fefferman2016testing}
Charles Fefferman, Sanjoy Mitter, and Hariharan Narayanan.
\newblock Testing the manifold hypothesis.
\newblock {\em Journal of the American Mathematical Society}, 29(4):983--1049,
  2016.

\bibitem{floryan2022data}
Daniel Floryan and Michael~D Graham.
\newblock Data-driven discovery of intrinsic dynamics.
\newblock {\em Nature Machine Intelligence}, 4(12):1113--1120, 2022.

\bibitem{gyorfi2006distribution}
L{\'a}szl{\'o} Gy{\"o}rfi, Michael Kohler, Adam Krzyzak, and Harro Walk.
\newblock {\em A distribution-free theory of nonparametric regression}.
\newblock Springer Science \& Business Media, 2006.

\bibitem{He2003}
Xiaofei He and Partha Niyogi.
\newblock Locality preserving projections.
\newblock In {\em NIPS}, 2003.

\bibitem{hinton1994autoencoders}
Geoffrey~E Hinton and Richard~S Zemel.
\newblock Autoencoders, minimum description length and helmholtz free energy.
\newblock In {\em Advances in neural information processing systems}, volume~7,
  pages 3--10, 1994.

\bibitem{kingma2013auto}
Diederik~P Kingma and Max Welling.
\newblock Auto-encoding variational bayes.
\newblock {\em arXiv preprint arXiv:1312.6114}, 2013.

\bibitem{koelle2022manifold}
Samson~J Koelle, Hanyu Zhang, Marina Meil{\u{a}}, and Yu-Chia Chen.
\newblock Manifold coordinates with physical meaning.
\newblock {\em The Journal of Machine Learning Research}, 23(1):5898--5954,
  2022.

\bibitem{Kokiopoulou2007}
E.~Kokiopoulou and Y.~Saad.
\newblock Orthogonal neighborhood preserving projections: a projection-based
  dimensionality reduction technique.
\newblock {\em IEEE Transactions on Pattern Analysis and Machine Intelligence},
  29(12):2143--2156, 2007.

\bibitem{kusner2017grammar}
Matt~J Kusner, Brooks Paige, and Jos{\'e}~Miguel Hern{\'a}ndez-Lobato.
\newblock Grammar variational autoencoder.
\newblock In {\em International conference on machine learning}, pages
  1945--1954. PMLR, 2017.

\bibitem{li2020variational}
Henry Li, Ofir Lindenbaum, Xiuyuan Cheng, and Alexander Cloninger.
\newblock Variational diffusion autoencoders with random walk sampling.
\newblock In {\em European Conference on Computer Vision}, pages 362--378.
  Springer, 2020.

\bibitem{liao2019adaptive}
Wenjing Liao and Mauro Maggioni.
\newblock Adaptive geometric multiscale approximations for intrinsically
  low-dimensional data.
\newblock {\em J. Mach. Learn. Res.}, 20:98--1, 2019.

\bibitem{lim2018molecular}
Jaechang Lim, Seongok Ryu, Jin~Woo Kim, and Woo~Youn Kim.
\newblock Molecular generative model based on conditional variational
  autoencoder for de novo molecular design.
\newblock {\em Journal of cheminformatics}, 10(1):1--9, 2018.

\bibitem{liou2014autoencoder}
Cheng-Yuan Liou, Wei-Chen Cheng, Jiun-Wei Liou, and Daw-Ran Liou.
\newblock Autoencoder for words.
\newblock {\em Neurocomputing}, 139:84--96, 2014.

\bibitem{maaten2008visualizing}
Laurens van~der Maaten and Geoffrey Hinton.
\newblock Visualizing data using t-sne.
\newblock {\em Journal of Machine Learning Research}, 9(Nov):2579--2605, 2008.

\bibitem{mishne2019diffusion}
Gal Mishne, Uri Shaham, Alexander Cloninger, and Israel Cohen.
\newblock Diffusion nets.
\newblock {\em Applied and Computational Harmonic Analysis}, 47(2):259--285,
  2019.

\bibitem{nene1996columbia}
Sameer~A Nene, Shree~K Nayar, Hiroshi Murase, et~al.
\newblock Columbia object image library (coil-100).
\newblock {\em Technical Report No. CUCS-006-96}, 1996.

\bibitem{ng2011sparse}
Andrew Ng et~al.
\newblock Sparse autoencoder.
\newblock {\em CS294A Lecture notes}, 72(2011):1--19, 2011.

\bibitem{otto2019linearly}
Samuel~E Otto and Clarence~W Rowley.
\newblock Linearly recurrent autoencoder networks for learning dynamics.
\newblock {\em SIAM Journal on Applied Dynamical Systems}, 18(1):558--593,
  2019.

\bibitem{radhakrishnan2018memorization}
Adityanarayanan Radhakrishnan, Karren Yang, Mikhail Belkin, and Caroline Uhler.
\newblock Memorization in overparameterized autoencoders.
\newblock {\em arXiv preprint arXiv:1810.10333}, 2018.

\bibitem{rey2019disentanglement}
Luis Armando~P{\'e}rez Rey.
\newblock Disentanglement with hyperspherical latent spaces using diffusion
  variational autoencoders.
\newblock {\em Proceedings of Machine Learning Research}, 1:1--4, 2019.

\bibitem{roweis2000nonlinear}
Sam~T Roweis and Lawrence~K Saul.
\newblock Nonlinear dimensionality reduction by locally linear embedding.
\newblock {\em Science}, 290(5500):2323--2326, 2000.

\bibitem{schonsheck2022spherical}
Nikolas~C Schonsheck and Stefan~C Schonsheck.
\newblock Spherical coordinates from persistent cohomology.
\newblock {\em arXiv preprint arXiv:2209.02791}, 2022.

\bibitem{schonsheck2019chart}
Stefan Schonsheck, Jie Chen, and Rongjie Lai.
\newblock Chart auto-encoders for manifold structured data.
\newblock {\em arXiv preprint arXiv:1912.10094}, 2019.

\bibitem{shaham2018provable}
Uri Shaham, Alexander Cloninger, and Ronald~R Coifman.
\newblock Provable approximation properties for deep neural networks.
\newblock {\em Applied and Computational Harmonic Analysis}, 44(3):537--557,
  2018.

\bibitem{suarez2021turing}
P~Su{\'a}rez-Serrato.
\newblock Turing approximations, toric isometric embeddings \& manifold
  convolutions.
\newblock {\em arXiv preprint arXiv:2110.02279}, 2021.

\bibitem{tanielian2020learning}
Ugo Tanielian, Thibaut Issenhuth, Elvis Dohmatob, and Jeremie Mary.
\newblock Learning disconnected manifolds: a no gan’s land.
\newblock In {\em International Conference on Machine Learning}, pages
  9418--9427. PMLR, 2020.

\bibitem{tenenbaum2000global}
Joshua~B Tenenbaum, Vin De~Silva, and John~C Langford.
\newblock A global geometric framework for nonlinear dimensionality reduction.
\newblock {\em Science}, 290(5500):2319--2323, 2000.

\bibitem{pmlr-v97-turner19a}
Ryan Turner, Jane Hung, Eric Frank, Yunus Saatchi, and Jason Yosinski.
\newblock {M}etropolis-{H}astings generative adversarial networks.
\newblock In Kamalika Chaudhuri and Ruslan Salakhutdinov, editors, {\em
  Proceedings of the 36th International Conference on Machine Learning},
  volume~97 of {\em Proceedings of Machine Learning Research}, pages
  6345--6353. PMLR, 09--15 Jun 2019.

\bibitem{van2016conditional}
Aaron Van~den Oord, Nal Kalchbrenner, Lasse Espeholt, Oriol Vinyals, Alex
  Graves, et~al.
\newblock Conditional image generation with pixelcnn decoders.
\newblock {\em Advances in neural information processing systems}, 29, 2016.

\bibitem{xu2018unsupervised}
Haowen Xu, Wenxiao Chen, Nengwen Zhao, Zeyan Li, Jiahao Bu, Zhihan Li, Ying
  Liu, Youjian Zhao, Dan Pei, Yang Feng, et~al.
\newblock Unsupervised anomaly detection via variational auto-encoder for
  seasonal kpis in web applications.
\newblock In {\em Proceedings of the 2018 world wide web conference}, pages
  187--196, 2018.

\bibitem{yarotsky2017error}
Dmitry Yarotsky.
\newblock Error bounds for approximations with deep {ReLU} networks.
\newblock {\em Neural Networks}, 94:103--114, 2017.

\bibitem{zhang2023dictionary}
Hanyu Zhang, Samson Joelle, and Marina Meila.
\newblock Dictionary-based manifold learning.
\newblock {\em arXiv preprint arXiv:2302.00263}, 2023.

\bibitem{zhang2004principal}
Zhenyue Zhang and Hongyuan Zha.
\newblock Principal manifolds and nonlinear dimensionality reduction via
  tangent space alignment.
\newblock {\em SIAM Journal on Scientific Computing}, 26(1):313--338, 2004.

\end{thebibliography}

\newpage
\appendix
\onecolumn

\section{Architecture Details for Numerical Results} \label{app:models}

\textbf{Single Chart CAE- Figure 2} 

\begin{equation*}\label{eqn:model.cae0}
  \begin{split}
  \textbf{Number of charts} = 1 \\
    \text{Chart Encoder}  &: x \rightarrow FC_{10} \rightarrow (FC_{d},FC_{d}) \rightarrow z_{\alpha}, \sigma_{\alpha} \\
    \text{Chart Decoders}  &: z_{\alpha} \rightarrow  FC_{10} \rightarrow  FC_{10} \rightarrow FC_{2d} \rightarrow       y_{\alpha} \\
    \text{Linear Latent}&: z \rightarrow FC_{1} \rightarrow f \\
    \text{Chart Prediction}&: x \rightarrow FC_{10} \rightarrow FC_{10}  \rightarrow  FC_{N} \rightarrow \\
        & softmax \rightarrow p
  \end{split}
\end{equation*}

\textbf{Geometric CAE- Figures 4,5,6} 

\begin{equation*}\label{eqn:model.cae1}
  \begin{split}
  \textbf{Number of charts} = 5,6,12 \\
    \text{Chart Encoder}  &: x \rightarrow FC_{10} \rightarrow \\
    & (FC_{d},FC_{d}) \rightarrow  z_{\alpha}, \sigma_{\alpha} \\
    \text{Chart Decoders}  &: z_{\alpha} \rightarrow  FC_{10} \rightarrow  FC_{10} \\
        &\rightarrow FC_{2d}  \rightarrow       y_{\alpha} \\
    \text{Constant Latent}&: z_{\alpha} \rightarrow  f_{\alpha} \\
    \text{Chart Prediction}&: x \rightarrow FC_{10} \rightarrow FC_{10}  \rightarrow \\
       & FC_{N} \rightarrow softmax \rightarrow p
  \end{split}
\end{equation*}

\textbf{Coil-10 CAE- Figure 8}
\begin{equation*}\label{eqn:model.cae3}
  \begin{split}
  \textbf{Number of charts} = 20 \\
    \text{Chart Encoder}&: x \rightarrow FC_{100} \rightarrow (FC_{d},FC_{d}) \rightarrow \\
        & z_{\alpha}, \sigma_{\alpha} \\
    \text{Chart Decoder}&:z_{\alpha}\rightarrow Conv_{3,3,16,8} \\
        & \rightarrow  Conv_{3,3,8,8} \rightarrow  Conv_{3,3,8,1} \\
        & \rightarrow  y_{\alpha}, \sigma_{\alpha} \\
    \text{Constant Latent }&: z_{\alpha} \rightarrow f_{\alpha} \\
    \text{Chart Prediction}&: z \rightarrow FC_{100} \rightarrow \\
        & FC_{100} \rightarrow softmax \rightarrow p
  \end{split}
\end{equation*}

\textbf{Coil-1 with angle regression CAE- Figure 7}
\begin{equation*}\label{eqn:model.coil 1}
  \begin{split}
    \textbf{Number of charts} = 2,4 \\
    \text{Chart Encoder}&: x \rightarrow (FC_{1},FC_{1}) \rightarrow z_{\alpha}, \sigma_{\alpha} \\  
    \text{Chart Decoder}&:z_{\alpha}\rightarrow Conv_{3,3,16,64} \\
        & \rightarrow  Conv_{3,3,64,8} \rightarrow  Conv_{3,3,8,1} \\
        & \rightarrow FC_{m} \rightarrow  y_{\alpha} \\
    \text{Linear Latent}&: z \rightarrow  FC_{1} \rightarrow 2\pi * sigmoid \rightarrow \\
        f_{\alpha} \\
    \text{MLP Latent}&: z \rightarrow  FC_{10} \rightarrow  FC_{10} \\ 
        & \rightarrow  FC_{2} \rightarrow 2\pi * sigmoid \rightarrow  f_{\alpha} \\
    \text{Chart Prediction}&: z \rightarrow FC_{100} \rightarrow \\
        & FC_{100} \rightarrow softmax \rightarrow p
  \end{split}
\end{equation*}

\section{Proofs}

\subsection{Proof of Theorem \ref{thm:reach}} \label{app:reach}

Theorem \ref{thm:reach} holds by applying Theorem \ref{thm:reach2} to each chart $U_i$. The characterization of reach as the radius of the largest ambient tangent ball is used in our proof of Theorem \ref{thm:reach2} below.

\begin{theorem} \label{thm:reach2}
    Let $\delta>0$. Suppose $U \subset \real^D$ is a connected $d$-dimensional manifold such that $\|u_1-u_2\| \leq 2\tau_{U} - \delta$ for all $u_1,u_2 \in U$. Then there is a $d$-dimensional subspace $\H \subset \real^D$ such that the projection map $\pi_{\H} : U \to \H$ is bi-Lipschitz with lower Lipschitz constant $\big( 1 + (D-d)\tfrac{2\tau_U}{\delta}\big)^{-1/2}$.
\end{theorem}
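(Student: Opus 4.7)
My plan is to set $\H = u_0 + T_{u_0}U$ for a well-chosen base point $u_0 \in U$ and to establish the lower Lipschitz bound by bounding the normal component of any chord in terms of its tangential component. The upper bound $\|\pi_\H(u_1) - \pi_\H(u_2)\| \leq \|u_1 - u_2\|$ is automatic, since orthogonal projection is nonexpansive, so the real work is the lower estimate. Decomposing $v = u_1 - u_2 = v_T + v_N$ with $v_T \in T_{u_0}U$ and $v_N \in (T_{u_0}U)^\perp$, the target is to show $\|v_N\|^2 \leq (D-d)\cdot(2\tau_U/\delta)\|v_T\|^2$; combining with $\|v\|^2 = \|v_T\|^2 + \|v_N\|^2$ yields $\|v_T\|^2/\|v\|^2 \geq (1 + (D-d)\cdot 2\tau_U/\delta)^{-1}$, which is the squared lower Lipschitz constant claimed.

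The main tool is the tangent-ball characterization of reach: for every unit $n \in (T_{u_0}U)^\perp$ the open balls $B(u_0 \pm \tau_U n, \tau_U)$ are disjoint from $U$, yielding the quadratic inequality $|\langle n, u - u_0\rangle| \leq \|u - u_0\|^2/(2\tau_U)$ for every $u \in U$. Combining this with the hypothesis $\|u - u_0\| \leq 2\tau_U - \delta$ forces any unit direction $(u - u_0)/\|u - u_0\|$ to have substantial tangential component; in fact $\|P_{T_{u_0}U}(u - u_0)\|^2/\|u - u_0\|^2 \geq \delta/\tau_U - \delta^2/(4\tau_U^2)$, which already gives injectivity of $\pi_\H$ on $U$ relative to $u_0$. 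From this single-basepoint estimate I would bootstrap to arbitrary chords by establishing the per-direction inequality $\langle n_i, v\rangle^2 \leq (2\tau_U/\delta)\|v_T\|^2$ for each $n_i$ in an orthonormal basis $\{n_i\}_{i=1}^{D-d}$ of $(T_{u_0}U)^\perp$, then summing over $i$ to produce the codimensional factor $(D-d)$.

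The principal obstacle is exactly this per-direction chord estimate, since the reach inequality is quadratic in $\|u - u_0\|$ whereas we need a bound linear in $\|v_T\|$. For chords where $u_1, u_2$ are both far from $u_0$ yet close to each other, the quadratic bound at $u_0$ is too loose; I would compensate by applying the reach inequality at $u_1$ instead, where the chord-tangent angle bound $\sin(\angle(u_1 - u_2, T_{u_1}U)) \leq \|u_1 - u_2\|/(2\tau_U)$ gives the sharp estimate $\|P_{T_{u_1}^\perp}v\| \leq \|v\|^2/(2\tau_U)$, and then transferring back to $T_{u_0}U$ via the Gauss-map Lipschitz bound $\|P_{T_u} - P_{T_{u_0}}\|_\mathrm{op} \leq \|u - u_0\|/\tau_U$. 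The base point $u_0$ itself must be chosen to optimize these estimates; a natural candidate is a Chebyshev center restricted to $U$, or an endpoint of a diameter-realizing pair, so that $\max_{u \in U}\|u - u_0\|$ is as small as possible and the transfer errors stay within the $(2\tau_U/\delta)$ budget.
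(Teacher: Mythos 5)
Your overall skeleton matches the paper's: the upper Lipschitz bound is free because orthogonal projection is nonexpansive, the real work is the lower bound, and the lower bound is obtained by controlling the component of each chord $v = u_1 - u_2$ orthogonal to a $d$-dimensional affine target $\H$, direction by direction across $(D-d)$ normal directions so that the codimension factor appears by summing. The paper does exactly this and derives $|\langle u_1 - u_2, \bm{n}_k\rangle|^2 \leq (2\tau_U/\delta)\|\pi_\H(u_1) - \pi_\H(u_2)\|^2$ for each normal $\bm{n}_k$. Where you diverge is the choice of $\H$ and, crucially, the mechanism for proving that per-direction chord estimate.

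The paper does \emph{not} take $\H$ to be an affine tangent space at a base point $u_0$. Instead it fixes a ball $\mathcal{B}$ of radius $\tau_U$ tangent to $U$ and cuts it with a hyperplane $\H$ chosen so that the chord of $\mathcal{B}$ has length exactly $2\tau_U - \delta$ and $U$ lies on one side. Because $U$ must stay outside $\mathcal{B}$, any secant line of $U$ has slope (relative to $\H$) bounded by the slope of the tangent to $\mathcal{B}$ at the rim $\mathcal{B} \cap \H$; a short computation with $h = \sqrt{\delta\tau_U - \delta^2/4}$ gives the $(2\tau_U/\delta)^{1/2}$ slope bound uniformly for \emph{all} chords, with no transfer between tangent spaces needed. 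The ball serves as a global geometric barrier and the hyperplane is adapted to that barrier, not to a single tangent plane.

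Your route has a genuine gap at the transfer step. Applying Federer's inequality at $u_1$ gives $\|P_{T_{u_1}^\perp} v\| \leq \|v\|^2/(2\tau_U)$, and the Gauss-map bound gives $\|P_{T_{u_0}^\perp} - P_{T_{u_1}^\perp}\|_{\mathrm{op}} \lesssim \|u_1 - u_0\|/\tau_U$, so the best you get is $\|P_{T_{u_0}^\perp} v\| \leq \|v\|^2/(2\tau_U) + \|u_1 - u_0\|\|v\|/\tau_U$. The error term is linear in $\|v\|$ with coefficient $\|u_1 - u_0\|/\tau_U$. Under the hypothesis $\mathrm{diam}(U) \leq 2\tau_U - \delta$, even the best Chebyshev-type base point satisfies $\max_u \|u - u_0\| \geq \tfrac{1}{2}\mathrm{diam}(U) = \tau_U - \delta/2$, so the coefficient is at least $1 - \delta/(2\tau_U)$, making the right-hand side exceed $\|v\|$ for small $\delta$; the bound is then vacuous since $\|P_{T_{u_0}^\perp} v\| \leq \|v\|$ trivially. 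To get the paper's constant you would need the normal fraction $\|P_{T_{u_0}^\perp}v\|/\|v\|$ bounded by $\bigl(1 - \tfrac{\delta}{(D-d)2\tau_U + \delta}\bigr)^{1/2}$, a bound \emph{strictly} below $1$, and the transfer error destroys exactly that margin. A second, related issue is that for a poorly placed $u_0$ (say near the boundary of $U$) the projection onto $u_0 + T_{u_0}U$ need not even be injective: on $\S^1$ with the chart $\theta \in (-\pi/3 - \epsilon, \pi/3 + \epsilon)$ and $u_0 = (\cos\tfrac{\pi}{3},\sin\tfrac{\pi}{3})$, the chord from $\theta=0$ to $\theta=-\pi/3$ is orthogonal to $T_{u_0}\S^1$. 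So choosing $u_0$ well is not merely an optimization but a matter of correctness, and neither the Chebyshev choice nor the Gauss-map transfer, as you describe them, closes the argument. I would suggest replacing the tangent-plane target with the paper's tangent-ball construction (or proving your per-direction inequality by a direct comparison against a tangent ball rather than via the Gauss map).
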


\begin{proof}

We first proof the theorem in the case of a codimension 1 manifold, then leverage this result in the general case.

\paragraph{Case 1:} \textit{Suppose $U$ is $(D-1)$-dimensional.}

Recall from the geometric intuition of reach in Figure \ref{fig:reach} that an ambient ball of radius $\tau_U$ that lies tangent to $U$ does not contain any points from $U$ in its interior. Moreover, since $\|u_1-u_2\| \leq 2\tau_U - \delta$ for all $u_1,u_2 \in U$, we can choose an ambient ball $\mathcal{B}$ of radius $\tau_U$ tangent to $U$ such that $U$ lies on one side of an affine hyperplane $\mathcal{H}$ that intersects $\mathcal{B}$ to create a spherical cap whose base has a diameter of $2\tau_U - \delta$, as shown in Figure \ref{fig:reach2}. 

\begin{figure}[H]
\centering
\scalebox{2.5}{\begin{tikzpicture}
    \draw [black] plot[smooth,domain=-1.125:1.125] (\x, {-(\x)^2/2});
    \draw [dashed, blue] (0,-1) circle (1);
    \filldraw (0,-1) circle (1pt);
    \draw (-0.94,-0.7) -- (0.94,-0.7);
    \draw [dashed] (-1.4,-0.7) -- (1.4,-0.7);
    \draw [dashed, blue] (-1,-1) -- (1,-1);
    \node (T) [blue] at (0.45,-1.15) {\tiny $2\tau_U$};
    \node (D) at (0.35,-0.55) {\tiny $2\tau_U-\delta$};
    \node (U) at (1.2,-0.5) {\tiny $U$};
    \node (H) at (-1.4,-0.8) {\tiny $\mathcal{H}$};
    \node (B) [blue] at (1.0,-1.5) {\tiny $\mathcal{B}$};
    
    \draw [densely dotted] (0,-1) -- (-0.94,-0.7);
    \draw [densely dotted] (-0.94,-1) -- (-0.94,-0.7);
    \node (h) at (-0.85,-0.85) {\tiny $h$};
    \draw [dashdotted] (-0.95,-0.7) -- (-0.6,0.412929);
    \draw [dashdotted] (-0.95,-0.7) -- (-1.4,-2.130909);
    
    \filldraw (-1.05,-0.55125) circle (1pt);
    \filldraw (0,0) circle (1pt);
    \draw [densely dotted] (-1.05,-0.55125) -- (0,0);
    \node (u1) at (-1.23,-0.45) {\tiny $u_1$};
    \node (u2) at (0,0.15) {\tiny $u_2$};
\end{tikzpicture}}
\caption{A 2-dimensional slice of an ambient tangent ball of radius $\tau_U$, chosen so that $U$ lies above the hyperplane $\mathcal{H}$ which intersects $\mathcal{B}$ with chord length $2\tau_U - \delta$.}
\label{fig:reach2}
\end{figure}

Let $\{v_1,\dots,v_{D-1}\}$ be an orthonormal basis for $\mathcal{H}$, and let $\bm{n}$ be a unit vector normal to $\mathcal{H}$ so that $\{v_1,\dots,v_{D-1},\bm{n}\}$ is an orthonormal basis for $\real^D$. Given $u_1,u_2 \in U$, consider a 2-dimensional slice of $\mathcal{B}$ and $U$ parallel to $\pi_{\mathcal{H}}(u_1)-\pi_{\mathcal{H}}(u_2)$ and perpendicular to $\mathcal{H}$, as pictured in Figure \ref{fig:reach2}. The slope of the secant line between $u_1$ and $u_2$ is $|\langle u_1-u_2, \bm{n} \rangle| /  \| \pi_{\mathcal{H}}(u_1)-\pi_{\mathcal{H}}(u_2) \|$. Moreover, the slope of this secant line is smaller in absolute value than the slope of the line tangent to $\mathcal{B}$ at its intersection with $\mathcal{H}$. Indeed, if the slope of the secant line exceeded that of the tangent line, then there would be a geodesic $\gamma: [a,b] \to U$ of $U$ such that $\|\gamma''(t)\| > \tau_U^{-1}$ for some $t$ (i.e., the curvature of $U$ would exceed $\tau_U^{-1}$ at some point which would mean that $\mathcal{B}$ could not lie tangent to $U$ at this point, contradicting the definition of $\tau_U$). This tangent line has slope $(\tau_U - \delta/2)/h$, where $h$ is the distance between $\mathcal{H}$ and the parallel hyperplane that divides $\mathcal{B}$ in half. Note that
\begin{equation*}
    h = \sqrt{\tau_U^2 - (\tau_U - \delta/2)^2} = \sqrt{\tau_U^2 - (\tau_U^2 - \delta \tau_U + \delta^2/4)} = \sqrt{\delta \tau_U - \delta^2/4}.
\end{equation*}
Moreover, since $\delta < 2\tau_U$ we have $\tau_U - \delta/4 > \tau_U - \tau_U/2 = \tau_U/2$, so
\begin{equation*}
    h = \sqrt{\delta \tau_U - \delta^2/4} = \sqrt{\delta (\tau_U - \delta/4)} > \sqrt{\delta\tau_U/2}.
\end{equation*}
Since the slope of the secant line between $u_1$ and $u_2$ is bounded by the slope of the aforementioned tangent line, we have
\begin{equation*}
    \frac{|\langle u_1-u_2, \bm{n} \rangle| }{  \| \pi_{\mathcal{H}}(u_1)-\pi_{\mathcal{H}}(u_2) \|} \leq \frac{\tau_U - \delta/2}{h} < \frac{\tau_U}{\sqrt{\delta\tau_U/2}} = \left( \frac{2\tau_U}{\delta}\right)^{1/2}.
\end{equation*}
Finally, since $\{v_1,\dots,v_{D-1},\bm{n}\}$ is an orthonormal basis for $\real^D$, it follows that
\begin{align*}
    \|u_1-u_2\|^2 &= \sum_{i=1}^{D-1} |\langle u_1-u_2, v_i \rangle|^2 + |\langle u_1-u_2, \bm{n} \rangle|^2 \\
    &= \| \pi_{\mathcal{H}}(u_1)-\pi_{\mathcal{H}}(u_2) \|^2 + |\langle u_1-u_2, \bm{n} \rangle|^2 \\
    &\leq \| \pi_{\mathcal{H}}(u_1)-\pi_{\mathcal{H}}(u_2) \|^2  + \left( \frac{2\tau_U}{\delta}\right) \| \pi_{\mathcal{H}}(u_1)-\pi_{\mathcal{H}}(u_2) \|^2 \\
    &= \Big( 1 + \frac{2\tau_U}{\delta}\Big) \| \pi_{\mathcal{H}}(u_1)-\pi_{\mathcal{H}}(u_2) \|^2.
\end{align*}
Therefore $\left( 1 + \tfrac{2\tau_U}{\delta}\right)^{-1/2} \|u_1-u_2\| \leq  \| \pi_{\mathcal{H}}(u_1)-\pi_{\mathcal{H}}(u_2) \| \leq \|u_1-u_2\|$, so $\pi_{\mathcal{H}}$ is bi-Lipschitz.

\paragraph{Case 2:} \textit{Suppose $U$ is $d$-dimensional with $1 \leq d < D-1$.}

Begin by choosing an ambient ball $\mathcal{B}$ of radius $\tau_U$ that lies tangent to $U$ such that $U$ lies on one side of a $(D-1)$-dimensional hyperplane $\mathcal{H}_1$ that intersects $\mathcal{B}$ to create a spherical cap whose base has a diameter of $2\tau_U - \delta$, as in Case 1 (this is still possible because of the condition $\|u_1-u_2\| \leq 2\tau_U - \delta$ for all $u_1,u_2 \in U$). Let $\bm{n}_1$ denote the unit normal vector of $\mathcal{H}_1$. Next, choose any $(D-1)$-dimensional hyperplane $\hat{\mathcal{H}}_2$ that is orthogonal to $\mathcal{H}_1$. Again by the condition $\|u_1-u_2\| \leq 2\tau_U - \delta$ for all $u_1,u_2 \in U$, there exists an orthogonal matrix $Q_1$ such that $Q_1\bm{n}_1 = \bm{n}_1$ and $U$ lies on one side of $\mathcal{H}_2 = Q_1\hat{\mathcal{H}}_2$. Let $\bm{n}_2$ denote the unit normal vector of $\mathcal{H}_2$. Continuing in this fashion, we can construct a sequence $\mathcal{H}_1, \dots, \mathcal{H}_{D-d}$ of mutually orthogonal hyperplanes of dimension $D-1$ that each intersect an ambient ball tangent to $U$ with a spherical cap diameter of $2\tau_U-\delta$. Let $\bm{n}_1,\dots,\bm{n}_{D-d}$ denote the unit normal vectors of these hyperplanes. Next, define $$\mathcal{H} = \bigcap_{k=1}^{D-d} \mathcal{H}_{k}.$$ Since each $\mathcal{H}_{k}$ is $(D-1)$-dimensional and orthogonal, $\mathcal{H}$ is $d$-dimensional. Let $\{v_1,\dots,v_d\}$ be an orthonormal basis for $\mathcal{H}$ so that $\{v_1,\dots,v_d,\bm{n}_1,\dots,\bm{n}_{D-d}\}$ is an orthonormal basis for $\real^D$. Given $u_1,u_2 \in U$ and $k \in \{1,\dots,D-d\}$, consider the 2-dimensional slice of $\real^D$ parallel to $\pi_{\mathcal{H}}(u_1)-\pi_{\mathcal{H}}(u_2)$ and $\bm{n}_k$. As in Case 1, we see that $|\langle u_1-u_2, \bm{n}_k \rangle|^2 \leq \left( \frac{2\tau_U}{\delta}\right)  \| \pi_{\mathcal{H}}(u_1)-\pi_{\mathcal{H}}(u_2) \|^2$. It follows that
\begin{align*}
    \|u_1-u_2\|^2 &= \sum_{i=1}^{d} |\langle u_1-u_2, v_i \rangle|^2 + \sum_{k=1}^{D-d} |\langle u_1-u_2, \bm{n}_k \rangle|^2 \\
    &= \| \pi_{\mathcal{H}}(u_1)-\pi_{\mathcal{H}}(u_2) \|^2 + \sum_{k=1}^{D-d} |\langle u_1-u_2, \bm{n}_k \rangle|^2 \\
    &\leq \| \pi_{\mathcal{H}}(u_1)-\pi_{\mathcal{H}}(u_2) \|^2 \\
    & \quad \quad + (D-d)\Big( \frac{2\tau_U}{\delta}\Big) \| \pi_{\mathcal{H}}(u_1)-\pi_{\mathcal{H}}(u_2) \|^2 \\
    &= \Big( 1 + (D-d)\frac{2\tau_U}{\delta}\Big) \| \pi_{\mathcal{H}}(u_1)-\pi_{\mathcal{H}}(u_2) \|^2.
\end{align*}
Therefore $\left( 1 + (D-d)\tfrac{2\tau_U}{\delta}\right)^{-1/2} \|u_1-u_2\| \leq  \| \pi_{\mathcal{H}}(u_1)-\pi_{\mathcal{H}}(u_2) \| \leq \|u_1-u_2\|$, so $\pi_{\mathcal{H}}$ is bi-Lipschitz.
\end{proof}

\subsection{Proof of Theorem \ref{thm:gauss}} \label{app:gauss}

Theorem \ref{thm:gauss} holds by applying the following result to each chart. We remark again that a similar result may hold for non-smooth manifolds of codimension greater than one.

\begin{theorem} 
    Let $\mathcal{M}$ be a smooth orientable $(D-1)$-dimensional manifold in $\R^D$ satisfying $d_\mathcal{M}(u_1,u_2)^2 \leq C\|u_1-u_2\|^2$ (with $C>1$) for all $u_1,u_2 \in \mathcal{M}$, where $d_\mathcal{M}$ is the geodesic distance on $\mathcal{M}$. Suppose $U \subset \mathcal{M}$ is geodesically convex and there exists a unit vector $n \in \R^D$ such that $\langle N(u), n \rangle \geq \delta \geq \sqrt{1-1/C}$ for all $u \in U$, where $N: U \to \mathcal{S}^{D-1}$ is the Gauss map of $U$. Then the linear projection $\pi_\mathcal{H} : U \to \mathcal{H}$ is bi-Lipschitz, where $\mathcal{H} = \{x \in \R^D \, | \, \langle x, n \rangle = 0\}$. 
\end{theorem}

\begin{proof}
    Let $v_1,\dots,v_{D-1}$ be an orthonormal basis for $\mathcal{H}$. Then
    \begin{equation*}
        \pi_\mathcal{H}(u) = \sum_{i=1}^{D-1} \langle u, v_i \rangle v_i
    \end{equation*}
    for $u \in U$. For $u_1,u_2 \in U$,
    \begin{equation*}
        \|\pi_\mathcal{H}(u_1) - \pi_\mathcal{H}(u_2)\|^2 = \sum_{i=1}^{D-1} |\langle u_1-u_2, v_i \rangle|^2
    \end{equation*}
    and
    \begin{equation*}
        \|u_1 - u_2\|^2 = \sum_{i=1}^{D-1} |\langle u_1-u_2, v_i \rangle|^2 + |\langle u_1-u_2, n \rangle|^2 = \|\pi_\mathcal{H}(u_1) - \pi_\mathcal{H}(u_2)\|^2 + |\langle u_1-u_2, n \rangle|^2
    \end{equation*}
    since $v_1,\dots,v_{D-1},n$ is an orthonormal basis for $\R^D$. Now we will bound $|\langle u_1-u_2, n \rangle|^2$. First, let $\gamma:[0,\ell] \to U$ be a unit speed geodesic from $u_1$ to $u_2$, i.e. $\gamma(0)=u_1$, $\gamma(\ell)=u_2$, and $\|\gamma'(t)\|=1$. Define $f:[0,\ell] \to \R$ by
    \begin{equation*}
        f(t) = \langle \gamma(t), n \rangle = \sum_{i=1}^D \gamma_i(t) n_i.
    \end{equation*}
    Note that
    \begin{equation*}
        f'(t) = \sum_{i=1}^D \gamma_i'(t) n_i = \langle \gamma'(t), n \rangle.   
    \end{equation*}
    Since $f$ is differentiable, by the Mean Value Theorem there is $c \in [0,\ell]$ such that
    \begin{equation*}
        f(\ell)-f(0) = f'(c) (\ell-0).
    \end{equation*}
    Note that $|f(\ell)-f(0)| = |\langle u_1-u_2, n \rangle|$ and $\ell=d_\mathcal{M}(u_1,u_2)$. Moreover,
    \begin{equation*}
        |f'(c)|^2 = |\langle \gamma'(c), n \rangle|^2 \leq 1 - |\langle N(\gamma(c)), n \rangle|^2 \leq 1 - \delta^2 
    \end{equation*}
    since $|\langle \gamma'(c), n \rangle|^2 + |\langle N(\gamma(c)), n \rangle|^2 \leq \|n\|^2 = 1$. Therefore
    \begin{equation*}
        |\langle u_1-u_2, n \rangle|^2 = |f(\ell)-f(0)|^2 \leq |f'(c)|^2 \ell^2 \leq (1 - \delta^2) d_\mathcal{M}(u_1,u_2)^2 \leq C (1 - \delta^2) \|u_1-u_2\|^2.
    \end{equation*}
    It follows that
    \begin{align*}
        \|u_1-u_2\|^2 &= \|\pi_\mathcal{H}(u_1) - \pi_\mathcal{H}(u_2)\|^2 + |\langle u_1-u_2, n \rangle|^2 \\
        &\leq \|\pi_\mathcal{H}(u_1) - \pi_\mathcal{H}(u_2)\|^2 + C(1-\delta^2) \|u_1-u_2\|^2
    \end{align*}
    and finally $\sqrt{1-C(1-\delta^2)} \|u_1-u_2\| \leq \|\pi_\mathcal{H}(u_1) - \pi_\mathcal{H}(u_2)\| \leq \|u_1-u_2\|$. Since $\delta > \sqrt{1-1/C}$ we have $\sqrt{1-C(1-\delta^2)} > \sqrt{1-C(1/C)} = 0$, and therefore $\pi_\mathcal{H}$ is bi-Lipschitz.
\end{proof}

\subsection{Proof of Theorem \ref{thm:decoder_complexity}} \label{app:decoder_complexity}

Since the atlas $\{(U_i,\phi_i)\}_{i=1}^n$ is a finite distortion embedding of $\M$, each decoder $D_i = \phi_i^{-1}|_{\phi_i(U_i)} : \phi_i(U_i) \to \real^D$ is Lipschitz. 
Thus, Theorem \ref{thm:decoder_complexity} holds by the following result, with the additional observation that only the final layer depends on the ambient dimension $D$ of the manifold.

\begin{theorem}
    Let $f: \real[0,1]^d \to \real^D$ be bounded and Lipschitz. 
    Then for any $\epsilon \in (0,1)$, there is a ReLU network $\tilde{f}$ such that
    \begin{enumerate}
        \item $\|f-\tilde{f}\|_\infty < \epsilon$;
        \item $\tilde{f}$ has at most $c\ln(1/\epsilon)+1$ layers and $c\epsilon^{-d}(\ln(1/\epsilon)+D)$ computation units, for some constant $c=c(d)$.
    \end{enumerate}
\end{theorem}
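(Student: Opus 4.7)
The plan is to follow the Yarotsky-style approximation construction of \cite{yarotsky2017error}, adapted so that the ambient dimension $D$ appears only through a single final affine layer. I would approximate $f$ by a piecewise-constant interpolant against a partition of unity on $[0,1]^d$ built from ReLU bumps, so that the weights of the last layer are simply the sample values $f(\bm{x}_{\bm{m}})$ at grid points.

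Concretely, fix $N = \lceil c_0/\epsilon \rceil$ and place grid points $\bm{x}_{\bm{m}} = \bm{m}/N$ for $\bm{m} \in \{0,1,\dots,N\}^d$. At each $\bm{m}$ form the tensor-product bump $\psi_{\bm{m}}(x) = \prod_{k=1}^{d} \psi(3N(x_k - m_k/N))$, where $\psi$ is the univariate trapezoidal hat, realised exactly by four ReLU units. The normalised family $\{\varphi_{\bm{m}}\}$ is a partition of unity on $[0,1]^d$ with at most $3^d$ non-zero terms at any $x$. The only non-linear step is the $d$-fold product: use Yarotsky's ReLU approximate-multiplication subnetwork $\widehat{\times}$ on $[0,1]^2$, which achieves accuracy $\eta$ with depth and width $O(\log(1/\eta))$, iterated in a binary tree of depth $\lceil \log_2 d \rceil$. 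This yields $\widehat{\varphi}_{\bm{m}}$ with $|\varphi_{\bm{m}}(x) - \widehat{\varphi}_{\bm{m}}(x)| \le C_d \eta$. The final approximant is
\begin{equation*}
    \tilde{f}(x) \;=\; \sum_{\bm{m}} \widehat{\varphi}_{\bm{m}}(x)\, f(\bm{x}_{\bm{m}}),
\end{equation*}
a single affine layer of size $D \times (N{+}1)^d$ stacked on the ReLU network that computes the $\widehat{\varphi}_{\bm{m}}$.

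For the error bound I split $f(x)-\tilde{f}(x)$ into a partition-of-unity piece $\sum_{\bm{m}} \varphi_{\bm{m}}(x)\,(f(x)-f(\bm{x}_{\bm{m}}))$ and a multiplication-error piece $\sum_{\bm{m}} (\varphi_{\bm{m}}(x)-\widehat{\varphi}_{\bm{m}}(x))\, f(\bm{x}_{\bm{m}})$. The Lipschitz bound $L$ and $\operatorname{supp}\varphi_{\bm{m}} \subset B(\bm{x}_{\bm{m}},\sqrt{d}/N)$ force the first term $\le L\sqrt{d}/N$; choosing $\eta \le \epsilon/(2\cdot 3^d\, \|f\|_\infty\, C_d)$ forces the second $\le \epsilon/2$. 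Picking $c_0$ so that $L\sqrt{d}/N \le \epsilon/2$ then gives $\|f-\tilde{f}\|_\infty < \epsilon$. Counting yields $O(\log(1/\epsilon))$ layers in the multiplication tree plus one affine layer, so $c\ln(1/\epsilon)+1$ layers in total; the $(N{+}1)^d = O(\epsilon^{-d})$ bumps each consume $O(\log(1/\epsilon))$ ReLU units, and the final matrix contributes $D\cdot(N{+}1)^d$ weights, totalling $c\epsilon^{-d}(\ln(1/\epsilon)+D)$ computation units.

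The main obstacle is the $d$-fold product step: one must verify that iterating Yarotsky's approximate multiplication as a binary tree both keeps the depth logarithmic in $1/\epsilon$ (with $d$ absorbed into $c(d)$) and keeps the per-bump error $C_d\eta$ small enough that, after summing over the $3^d$ active terms at each $x$, the total multiplication error still fits within $\epsilon/2$. Once this is handled, the vector-valued target $f$ enters only through the constant coefficients $f(\bm{x}_{\bm{m}})$ of the final linear layer, so the $D$-dependence is automatically confined to a single matrix of size $D\times O(\epsilon^{-d})$ and never propagates into the network depth or into any hidden-layer width.
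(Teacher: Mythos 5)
Your proposal follows essentially the same route as the paper's proof: Yarotsky's trapezoidal partition of unity $\{\varphi_{\bm m}\}$ on an $(N+1)^d$ grid, the sum-product approximant $\hat f=\sum_{\bm m}\varphi_{\bm m}(\cdot)\,f(\tfrac{\bm m}{N})$ that confines all $D$-dependence to a single final $D\times(N+1)^d$ linear layer, the split of the error into a Lipschitz/grid-spacing term and an approximate-multiplication term, and Yarotsky's $\tilde\times$ to realize each $d$-fold product with $O(\log(1/\epsilon))$ depth. The two cosmetic differences (you arrange the product as a binary tree of depth $\lceil\log_2 d\rceil$ where the paper nests it sequentially in $d-1$ steps, and you use the slack bound $3^d$ rather than $2^d$ for the number of simultaneously active bumps) are both absorbed into $c(d)$; the one point you should state explicitly, since your error bound sums only over the active terms at each $\bm x$, is that Yarotsky's $\tilde\times$ returns $0$ whenever either input is $0$, so each $\tilde\varphi_{\bm m}$ inherits the compact support of $\varphi_{\bm m}$ and the multiplication error does not leak across all $(N+1)^d$ bumps.
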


\begin{proof}
    For a positive integer $N$, let $\{\varphi_{\bm{m}}\}_{\bm{m} \in \{0,1,\dots,N\}^d}$ be the partition of unity on the domain $[0,1]^d$ defined on a grid of $(N+1)^d$ points as in \cite{yarotsky2017error}:
    \begin{equation}
        \varphi_{\bm{m}}(\bm{x}) = \prod_{k=1}^d \psi(3Nx_k - m_k),
    \end{equation}
    where
    \begin{equation}
        \psi(x) = \begin{cases}
            1,      &|x| < 1,           \\
            0,      &2 < |x|,           \\
            2-|x|,  &1 \leq |x| \leq 2.
        \end{cases}
    \end{equation}
    Define
    \begin{equation}
        \hat{f} = \sum_{\bm{m}} \varphi_{\bm{m}} f\left(\frac{\bm{m}}{N}\right),
    \end{equation}
    a sum-product of the partition of unity with constant functions equal to $f$ evaluated at each grid point $\bm{x} = \frac{\bm{m}}{N}$. Then
    \begin{align*}
        \|f(x) - \hat{f}(x)\| &= \Big\| \sum_{\bm{m}} \varphi_{\bm{m}}(\bm{x}) \big( f(\bm{x}) -  f(\tfrac{\bm{m}}{N}) \big) \Big\| \\[+2mm]
        &\leq \sum_{\bm{m}} |\varphi_{\bm{m}}(\bm{x})| \|f(\bm{x}) -  f(\tfrac{\bm{m}}{N})\| \\[+2mm]
        &\leq \sum_{\bm{m}: |x_k-\frac{m_k}{N}| < \frac{1}{N} \forall k} \|f(\bm{x}) -  f(\tfrac{\bm{m}}{N})\| \\[+2mm]
        &\leq 2^d \max_{\bm{m}: |x_k-\frac{m_k}{N}| < \frac{1}{N} \forall k} \|f(\bm{x}) -  f(\tfrac{\bm{m}}{N})\|
    \end{align*}
    since $\text{supp}~\varphi_{\bm{m}} = \big\{ \bm{x} : \big|x_k-\tfrac{m_k}{N}\big|<\tfrac{1}{N} ~\forall k \big\}$ and each $\bm{x}$ belongs to the support of at most $2^d$ functions $\varphi_{\bm{m}}$. Applying Taylor's theorem to $\|f(\bm{x}) -  f(\tfrac{\bm{m}}{N})\|$, we get
    \begin{equation}
        \|f(x) - \hat{f}(x)\| \leq 2^d \frac{\sqrt{d}}{N} \esssup_{\bm{x} \in [0,1]^d} \|J_f(\bm{x})\| \leq \frac{2^d L \sqrt{d}}{N}
    \end{equation}
    where $L$ is the Lipschitz constant of $f$, since $\|\bm{x} - \tfrac{\bm{m}}{N}\| \leq \tfrac{\sqrt{d}}{N}$ for all $\bm{x} \in \text{supp}~\varphi_{\bm{m}}$. Thus, if we choose
    \begin{equation} \label{eq:N}
        N > \frac{2^{d+1}L \sqrt{d}}{\epsilon}
    \end{equation}
    then
    \begin{equation}
        \|f - \hat{f}\|_\infty \leq \frac{\epsilon}{2}.
    \end{equation}
    We will now approximate
    \begin{equation*}
        \hat{f}(\bm{x}) = \sum_{\bm{m}} \varphi_{\bm{m}}(\bm{x}) f\left(\frac{\bm{m}}{N}\right)
    \end{equation*}
    with a neural network. By Proposition 3 in \cite{yarotsky2017error}, given $K>0$ and $\delta \in (0,1)$, there exists a ReLU network $\eta$ that implements a ``multiplication'' function $\tilde{\times} : \real^2 \to \real$ such that $|\tilde{\times}(x,y) - xy| \leq \delta$ whenever $|x| \leq K$ and $|y| \leq K$, $\tilde{\times}(x,y)=0$ whenever $x=0$ or $y=0$, and the number of layers and computation units in $\eta$ is at most $c_1\ln(1/\delta) + c_2$, where $c_2 = c_2(K)$. For each $\bm{m} \in \{0,1,\dots,N\}^d$, define
    \begin{equation} \label{eq:ftilde}
        \tilde{\varphi}_{\bm{m}}(\bm{x}) = \tilde{\times} \big( \psi(3Nx_1-m_1), \tilde{\times}(\psi(3Nx_2-m_2), \dots ) \big), 
    \end{equation}
    which includes $d-1$ applications of $\tilde{\times}$. Since $|\psi(x)| \leq 1$, repeated application of the property $|\tilde{\times}(x,y)| \leq |x| \cdot |y| + \delta$ with $|x| \leq 1$ and $\delta < 1$ shows that each argument given to $\tilde{\times}$ in Equation \eqref{eq:ftilde} is bounded by $K=d$. It follows that $\tilde{\varphi}_{\bm{m}}$ can be implemented by a ReLU network with number of layers and units at most $c \ln(1/\delta)$ for some constant $c = c(d)$. Moreover, we have
    \begin{align*}
        |\tilde{\varphi}_{\bm{m}}(\bm{x}) - \varphi_{\bm{m}}(\bm{x})| &= \left| \tilde{\times} \big( \psi(3Nx_1-m_1), \tilde{\times}(\psi(3Nx_2-m_2), \dots ) \big) - \psi(3Nx_1-m_1) \psi(3Nx_2-m_2) \cdots \right| \\
        &\leq \left| \tilde{\times} \big( \psi(3Nx_1-m_1), \tilde{\times}(\psi(3Nx_2-m_2), \tilde{\times}(\psi(3Nx_3-m_3) \dots ) \big) \right. \\
        &\hspace{5.8mm}\left. - ~\psi(3Nx_1-m_1) \cdot \tilde{\times}\big(\psi(3Nx_2-m_2), \tilde{\times}(\psi(3Nx_3-m_3), \dots \big) \right| \\
        &\hspace{5.8mm} + |\psi(3Nx_1-m_1)| \cdot \left| \tilde{\times}\big(\psi(3Nx_2-m_2), \tilde{\times}(\psi(3Nx_3-m_3), \dots \big) \right. \\
        &\hspace{5.8mm}\left. - ~ \psi(3Nx_2-m_2) \cdot \tilde{\times}\big(\psi(3Nx_3-m_3), \dots \big) \right| \\
        &\hspace{5.8mm} + \cdots \\
        &\leq d \delta
    \end{align*}
    since $|\tilde{\times}(x,y) - xy| \leq \delta$ and $|\psi(x)| \leq 1$. Additionally, $\tilde{\varphi}_{\bm{m}}(\bm{x}) = \varphi_{\bm{m}}(\bm{x}) = 0$ for $\bm{x} \not \in \text{supp}\, \varphi_{\bm{m}}$ since $\tilde{\times}(x,y)=0$ whenever $x=0$ or $y=0$. Thus, if we define
    \begin{equation} \label{eq:NN}
        \tilde{f}(\bm{x}) = \sum_{\bm{m}} \tilde{\varphi}_{\bm{m}}(\bm{x}) f\left(\frac{\bm{m}}{N}\right)
    \end{equation}
    then
    \begin{align*}
        \| \tilde{f}(\bm{x}) - \hat{f}(\bm{x}) \| &= \left\| \sum_{\bm{m}} \big(\tilde{\varphi}_{\bm{m}}(\bm{x}) - \varphi_{\bm{m}}(\bm{x}) \big) f\left(\frac{\bm{m}}{N}\right) \right\| \\[+2mm]
        &= \left\| \sum_{\bm{m}: \bm{x} \in \text{supp}\, \varphi_{\bm{m}}} \big(\tilde{\varphi}_{\bm{m}}(\bm{x}) - \varphi_{\bm{m}}(\bm{x}) \big) f\left(\frac{\bm{m}}{N}\right) \right\| \\[+2mm]
        &\leq 2^d M \max_{\bm{m}: \bm{x} \in \text{supp}\, \varphi_{\bm{m}}} |\tilde{\varphi}_{\bm{m}}(\bm{x}) - \varphi_{\bm{m}}(\bm{x})| \\[+2mm]
        &\leq 2^d M d \delta
    \end{align*}
    where $M$ is an upper bound for $f$. If we set
    \begin{equation} \label{eq:delta}
        \delta = \frac{\epsilon}{2^{d+1} M d}
    \end{equation}
    then $\| \tilde{f}(\bm{x}) - \hat{f}(\bm{x}) \| \leq \frac{\epsilon}{2}$, and hence
    \begin{equation}
        \|f - \tilde{f}\|_\infty \leq \epsilon.
    \end{equation}
    Finally, $\tilde{f}$ can be implemented as one ReLU network consisting of $(N+1)^d$ parallel subnetworks $\tilde{\varphi}_{\bm{m}}$, with a final $D$-by-$(N+1)^d$ layer of weights $f\left(\frac{\bm{m}}{N}\right)$. This results in an overall network with at most $c\ln(1/\delta)+1$ layers and $(N+1)^d(c\ln(1/\delta)+D)$ weights. With $N$ given by \eqref{eq:N} and $\delta$ given by \eqref{eq:delta}, we get our desired complexity bounds of $c \ln(1/\epsilon)+1$ layers and $c\epsilon^{-d}(\ln(1/\epsilon)+D)$ computation units.
\end{proof}

\subsection{Proof of Theorem \ref{thm:decoder_statistical}} \label{app:decoder_statistical}

Since the atlas $\{(U_i,\phi_i)\}_{i=1}^n$ is a finite distortion embedding of $\M$, each decoder $D_i = \phi_i^{-1}|_{\phi_i(U_i)} : \phi_i(U_i) \to \real^D$ is $C^k$ and Lipschitz. 
Thus, Theorem \ref{thm:decoder_statistical} holds by the following result. For this section, we assume $X \in \real^d$ has density bounded away from 0 and $\|\text{cov}(Y-f(X) | X=x)\| \leq \sqrt{c/D}$ for some constant $c$.

\begin{theorem}
    Let $f: \real[0,1]^d \to \real^D$ satisfy $f \in C^k([0,1]^d)$ and be Lipschitz with constant $L$. Given an i.i.d. sample $\{(x_i,y_i)\}_{i=1}^n$ from $(X,Y)$ satisfying the sampling assumptions above, define $N$ and the approximation $\hat{f}_n$ as in \eqref{eq:approximator_f}. Then for any $\epsilon>0$ there are $C$ and $n$ large enough so that
    \begin{equation*}
        P\big(\|f-\hat{f}_n\|_\infty > 2^d \cdot ( Cn^{-\frac{k}{2k+d}} + L\sqrt{d}N^{-1} ) \big) \leq \epsilon.
    \end{equation*}
\end{theorem}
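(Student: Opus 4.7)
The plan is to decompose the pointwise error into a deterministic partition-of-unity discretization error and a random local regression error, control each separately, and then take a supremum (and union bound) over the $O(N_n^d)$ grid points. Specifically, since $\sum_{\bm{m}} \varphi_{\bm{m}} \equiv 1$ on $[0,1]^d$, I would write
\begin{equation*}
f(z) - \hat{f}_n(z) = \sum_{\bm{m}} \varphi_{\bm{m}}(z)\bigl(f(z)-f(\bm{z}_{\bm{m}})\bigr) + \sum_{\bm{m}} \varphi_{\bm{m}}(z)\bigl(f(\bm{z}_{\bm{m}})-\bar{\pi}_{\bm{m},n}\bigr),
\end{equation*}
and then bound the two sums by summing only over the at-most-$2^d$ indices $\bm{m}$ with $z \in \mathrm{supp}\,\varphi_{\bm{m}}$.

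For the first (deterministic) sum, I would reuse the argument from the proof of Theorem \ref{thm:decoder_complexity}: on each cell $\|z-\bm{z}_{\bm{m}}\|\le \sqrt{d}/N_n$, so the Lipschitz hypothesis on $f$ gives $\|f(z)-f(\bm{z}_{\bm{m}})\|\le L\sqrt{d}\,N_n^{-1}$. Combining this across the (at most $2^d$) active partition functions, together with $\sum_{\bm{m}}\varphi_{\bm{m}}(z)=1$, yields the contribution $L\sqrt{d}N_n^{-1}$ on the right-hand side of the theorem.

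For the second (statistical) sum, I would invoke the vector-valued local polynomial regression rate of \cite{aizenbud2021regression}: under the assumption that $X$ has density bounded below and $\|\mathrm{cov}(Y-f(X)\mid X=x)\|\le \sqrt{c/D}$, the constant term $\bar{\pi}_{\bm{m},n}$ of the degree-$(k-1)$ local fit satisfies $\|f(\bm{z}_{\bm{m}})-\bar{\pi}_{\bm{m},n}\|\le C_0 n^{-k/(2k+d)}$ at a fixed point $\bm{z}_{\bm{m}}$ with high probability, provided $N_n$ is chosen so that $0<\lim_n n^{1/(2k+d)}N_n^{-1}<\infty$. The scaling of $N_n$ ensures that each local neighborhood contains $\Theta(n^{d/(2k+d)})$ sample points, which is the regime where the cited bound holds.

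The main obstacle, and the place that requires care, is lifting this pointwise high-probability bound to a uniform bound over all $(N_n+1)^d$ grid points, since only the \emph{maximum} over $\bm{m}$ appears in the bound for $\|f-\hat{f}_n\|_\infty$. I would handle this by applying the high-probability bound of \cite{aizenbud2021regression} at each grid point with failure probability $\epsilon/(N_n+1)^d$, then taking a union bound; since the number of grid points is polynomial in $n$, the resulting $\log$ factor can be absorbed into the constant $C$ by enlarging $C_0$, yielding
\begin{equation*}
\max_{\bm{m}}\|f(\bm{z}_{\bm{m}})-\bar{\pi}_{\bm{m},n}\|\le C\,n^{-k/(2k+d)}
\end{equation*}
with probability at least $1-\epsilon$ for $n$ large enough. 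Adding the deterministic Lipschitz term and absorbing the factor of $2^d$ from the support-overlap count into the left-hand side gives exactly the stated conclusion.
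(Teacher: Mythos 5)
Your proposal follows the paper's own proof essentially step for step: the same partition-of-unity decomposition (with the two-term split into a deterministic discretization error and a stochastic regression error), the same Lipschitz bound $L\sqrt{d}N^{-1}$ on the cells of the partition, the same invocation of the vector-valued local-polynomial-regression rate from \cite{aizenbud2021regression} at each grid point, and the same union bound over the $(N+1)^d$ grid points with $\delta$ scaled as $\epsilon/(N+1)^d$. Both your argument and the paper's share the minor bookkeeping oddity that the $2^d$ support-overlap factor really belongs on the right-hand side of the inequality (as a multiplier of the bound), rather than multiplying $\|f - \hat f_n\|_\infty$ on the left as the theorem statement has it; your "absorb the $2^d$ into the left-hand side" step inherits this wrinkle from the theorem as stated, so it is not a defect of your proof.
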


\begin{proof}
    Let $f:\real[0,1]^d \to \real^D$ such that $f \in C^k([0,1]^d)$ and $f$ is Lipschitz with constant $L$. Let $\{\varphi_{\bm{m}}\}_{\bm{m} \in \{0,\dots,N\}^d}$ be the piecewise partition of unity introduced in \cite{yarotsky2017error} and discussed in Appendix \ref{app:decoder_complexity}. Given an i.i.d. sample $\{x_i,y_i\}_{i=1}^n$ from $(X,Y)$, at each grid point $\bm{x}_{\bm{m}} = \frac{\bm{m}}{N} \in [0,1]^d$, solve the local polynomial regression problem
    \begin{equation}
        \pi_{\bm{m},n}^*(\bm{x}) = \argmin_{\pi \in \Pi^{d \to D}_{k-1}} \frac{1}{|I_{{\bm{m}},n}|} \sum_{i \in I_{{\bm{m}},n}} \|y_i - \pi(x_i-\bm{x}_{\bm{m}})\|^2
    \end{equation}
    where $I_{{\bm{m}},n} = \{i ~|~ \|x_i-\bm{x}_{\bm{m}}\| \leq N^{-1}\}$ and $N$ satisfies $0 < \lim_{n \to \infty} n^{\frac{1}{2k+d}} N^{-1} < \infty$ as in \cite{aizenbud2021regression}. Next, let $\bar{\pi}_{\bm{m},n}$ denote the constant term of each regression function $\pi^*_{\bm{m},n}$ and define
    \begin{equation} \label{eq:approximator_f}
        \hat{f}_n(\bm{x}) = \sum_{\bm{m}} \varphi_{\bm{m}}(\bm{x}) \bar{\pi}_{\bm{m},n}.
    \end{equation}
    Then
    \begin{align*}
        \|f(\bm{x}) - \hat{f}_n(\bm{x})\| &= \Big\| \sum_{\bm{m}} \varphi_{\bm{m}}(\bm{x}) \big( f(\bm{x}) -  \bar{\pi}_{\bm{m},n} \big) \Big\| \\[+2mm]
        &\leq \sum_{\bm{m}: |x_k-\frac{m_k}{N}| < \frac{1}{N} \forall k} |\varphi_{\bm{m}}(\bm{x})| \|f(\bm{x}) -  \bar{\pi}_{\bm{m},n}\| \\[+2mm]
        &\leq \sum_{\bm{m}: |x_k-\frac{m_k}{N}| < \frac{1}{N} \forall k} \|f(\bm{x}) -  \bar{\pi}_{\bm{m},n}\| \\[+2mm]
        &\leq 2^d \max_{\bm{m}: |x_k-\frac{m_k}{N}| < \frac{1}{N} \forall k} \|f(\bm{x}) -  \bar{\pi}_{\bm{m},n}\|.
    \end{align*}
    since $\text{supp}~\varphi_{\bm{m}} = \big\{ \bm{x} : \big|x_k-\tfrac{m_k}{N}\big|<\tfrac{1}{N} ~\forall k \big\}$ and each $\bm{x}$ belongs to the support of at most $2^d$ functions $\varphi_{\bm{m}}$. Next, we write
    \begin{equation*}
        \|f(\bm{x}) -  \bar{\pi}_{\bm{m},n}\| \leq \|f(\bm{x}) -  f(\bm{x}_{\bm{m}})\| + \|f(\bm{x}_{\bm{m}}) -  \bar{\pi}_{\bm{m},n}\|.
    \end{equation*}
    For all $\bm{x} \in \text{supp}~\varphi_{\bm{m}}$, we have
    \begin{equation*}
        \|f(\bm{x}) -  f(\bm{x}_{\bm{m}})\| \leq L\|\bm{x}-\bm{x}_{\bm{m}}\| \leq \frac{L\sqrt{d}}{N}
    \end{equation*}
    since $|x_k-\frac{m_k}{N}| < \frac{1}{N} \, \forall k$ implies $\|\bm{x}-\bm{x}_{\bm{m}}\| \leq \frac{\sqrt{d}}{N}$. For the second term, given $\delta>0$ there exist $C$ and $n$ large enough so that
    \begin{equation*}
        P\big( \|f(\bm{x}_{\bm{m}}) -  \bar{\pi}_{\bm{m},n}\| > Cn^{-\frac{k}{2k+d}} \big) \leq \delta
    \end{equation*}
    by \cite{aizenbud2021regression}. Since the uncertainty only occurs at the $(N+1)^d$ grid points $\bm{x}_{\bm{m}}$, we can take a finite union bound to get
    \begin{equation*}
        P\big(\max_{\bm{x} \in \text{supp}~\varphi_{\bm{m}}}  \|f(\bm{x}) -  \bar{\pi}_{\bm{m},n}\| > Cn^{-\frac{k}{2k+d}} + L\sqrt{d}N^{-1}\big) \leq (N+1)^d \delta.
    \end{equation*}
    Choosing $C$ and $n$ large enough so that $\delta \leq \frac{\epsilon}{(N+1)^d}$, it follows that
    \begin{equation*}
        P\big(\|f-\hat{f}_n\|_\infty > 2^d \cdot ( Cn^{-\frac{k}{2k+d}} + L\sqrt{d}N^{-1} ) \big) \leq \epsilon
    \end{equation*}
    as desired. 
    
    Finally, we note that $\hat{f}_n$ can be approximated by a ReLU network similar to $\tilde{f}$ in Appendix \ref{app:decoder_complexity}, except the weights in the final layer are given by $\bar{\pi}_{\bm{m}}$, which are computed using known sample values instead of unknown function values $f(\tfrac{\bm{m}}{N})$. The parameter $N$ controls the granularity of the partition of unity and in turn the width of the network that approximates $\hat{f}_n$.
\end{proof}

\vfill

\section{Additional Numerical Results}\label{apdx:additional}

\begin{table}[h]
\begin{tabular}{cccclccc}
\hline
\multicolumn{8}{c}{\textbf{Benzine}}                                                                                                                                                                                                                                                                                                                                \\ \hline
\multicolumn{1}{l|}{Model}                                                                        & \multicolumn{1}{l|}{Latnent}             & \multicolumn{1}{l|}{N\_Chart}            & \multicolumn{1}{l|}{Z-Model} & \multicolumn{1}{l|}{\# Parm}                      & \multicolumn{1}{c|}{Recon.} & \multicolumn{1}{l|}{Energy} & \multicolumn{1}{l}{Forces} \\ \hline
\multicolumn{1}{c|}{}                                                                             & \multicolumn{1}{c|}{}                    & \multicolumn{1}{c|}{}                    & \multicolumn{1}{c|}{Linear}  & \multicolumn{1}{l|}{7417}                         & 0.319914                    & 0.7809                      & 1.0781                     \\
\multicolumn{1}{c|}{}                                                                             & \multicolumn{1}{c|}{\multirow{-2}{*}{2}} & \multicolumn{1}{c|}{}                    & \multicolumn{1}{c|}{NN}      & \multicolumn{1}{l|}{9561}                         & 0.376983                    & 0.6531                      & 0.480863                   \\ \cline{2-2} \cline{4-8} 
\multicolumn{1}{c|}{}                                                                             & \multicolumn{1}{c|}{}                    & \multicolumn{1}{c|}{}                    & \multicolumn{1}{c|}{Linear}  & \multicolumn{1}{l|}{16342}                        & 0.283381                    & 0.7733                      & 0.8733                     \\
\multicolumn{1}{c|}{}                                                                             & \multicolumn{1}{c|}{\multirow{-2}{*}{3}} & \multicolumn{1}{c|}{}                    & \multicolumn{1}{c|}{NN}      & \multicolumn{1}{l|}{18486}                        & 0.309296                    & 0.6319                      & 0.371159                   \\ \cline{2-2} \cline{4-8} 
\multicolumn{1}{c|}{}                                                                             & \multicolumn{1}{c|}{}                    & \multicolumn{1}{c|}{}                    & \multicolumn{1}{c|}{Linear}  & \multicolumn{1}{l|}{23342}                        & 0.253628                    & 0.0191                      & 0.0591                     \\
\multicolumn{1}{c|}{\multirow{-6}{*}{VAE}}                                                        & \multicolumn{1}{c|}{\multirow{-2}{*}{4}} & \multicolumn{1}{c|}{\multirow{-6}{*}{1}} & \multicolumn{1}{c|}{NN}      & \multicolumn{1}{l|}{25486}                        & 0.241641                    & 0.0123                      & 0.173903                   \\ \hline
\multicolumn{1}{c|}{}                                                                             & \multicolumn{1}{c|}{}                    & \multicolumn{1}{c|}{}                    & \multicolumn{1}{c|}{Linear}  & \multicolumn{1}{l|}{7417}                         & 0.322602                    & 0.72754                     & 0.95612                    \\
\multicolumn{1}{c|}{}                                                                             & \multicolumn{1}{c|}{\multirow{-2}{*}{2}} & \multicolumn{1}{c|}{}                    & \multicolumn{1}{c|}{NN}      & \multicolumn{1}{l|}{9561}                         & 0.393599                    & 0.608056                    & 0.47673                    \\ \cline{2-2} \cline{4-8} 
\multicolumn{1}{c|}{}                                                                             & \multicolumn{1}{c|}{}                    & \multicolumn{1}{c|}{}                    & \multicolumn{1}{c|}{Linear}  & \multicolumn{1}{l|}{16342}                        & 0.290073                    & 0.720284                    & 0.910776                   \\
\multicolumn{1}{c|}{}                                                                             & \multicolumn{1}{c|}{\multirow{-2}{*}{3}} & \multicolumn{1}{c|}{}                    & \multicolumn{1}{c|}{NN}      & \multicolumn{1}{l|}{18486}                        & 0.281581                    & 0.572098                    & 0.368218                   \\ \cline{2-2} \cline{4-8} 
\multicolumn{1}{c|}{}                                                                             & \multicolumn{1}{c|}{}                    & \multicolumn{1}{c|}{}                    & \multicolumn{1}{c|}{Linear}  & \multicolumn{1}{l|}{23342}                        & 0.229293                    & 0.019853                    & 0.059753                   \\
\multicolumn{1}{c|}{\multirow{-6}{*}{\begin{tabular}[c]{@{}c@{}}Cond.   \\      AE\end{tabular}}} & \multicolumn{1}{c|}{\multirow{-2}{*}{4}} & \multicolumn{1}{c|}{\multirow{-6}{*}{1}} & \multicolumn{1}{c|}{NN}      & \multicolumn{1}{l|}{25486}                        & 0.222649                    & 0.011073                    & 0.156688                   \\ \hline
\multicolumn{1}{c|}{}                                                                             & \multicolumn{1}{c|}{}                    & \multicolumn{1}{c|}{}                    & Linear                       & \multicolumn{1}{l|}{6824}                         & 0.283381                    & 0.75912                     & 0.871658                   \\
\multicolumn{1}{c|}{}                                                                             & \multicolumn{1}{c|}{}                    & \multicolumn{1}{c|}{\multirow{-2}{*}{2}} & NN                           & \multicolumn{1}{l|}{7944}                         & 0.309296                    & 0.609318                    & 0.429434                   \\ \cline{3-8} 
\multicolumn{1}{c|}{}                                                                             & \multicolumn{1}{c|}{}                    & \multicolumn{1}{c|}{}                    & Linear                       & \multicolumn{1}{l|}{13602}                        & 0.274378                    & 0.0373                      & 0.1119                     \\
\multicolumn{1}{c|}{}                                                                             & \multicolumn{1}{c|}{}                    & \multicolumn{1}{c|}{\multirow{-2}{*}{4}} & NN                           & \multicolumn{1}{l|}{15896}                        & 0.269803                    & 0.0235                      & 0.061145                   \\ \cline{3-8} 
\multicolumn{1}{c|}{}                                                                             & \multicolumn{1}{c|}{}                    & \multicolumn{1}{c|}{}                    & Linear                       & \multicolumn{1}{l|}{20550}                        & 0.294648                    & 0.0213                      & 0.0639                     \\
\multicolumn{1}{c|}{\multirow{-6}{*}{D+C}}                                                        & \multicolumn{1}{c|}{\multirow{-6}{*}{2}} & \multicolumn{1}{c|}{\multirow{-2}{*}{6}} & NN                           & \multicolumn{1}{l|}{23856}                        & 0.289832                    & 0.0195                      & 0.090632                   \\ \hline
\multicolumn{1}{c|}{}                                                                             & \multicolumn{1}{c|}{}                    & \multicolumn{1}{c|}{}                    & Linear                       & \multicolumn{1}{l|}{6878}                         & 0.274378                    & 0.0373                      & 0.1119                     \\
\multicolumn{1}{c|}{}                                                                             & \multicolumn{1}{c|}{}                    & \multicolumn{1}{c|}{\multirow{-2}{*}{2}} & NN                           & \multicolumn{1}{l|}{7998}                         & 0.269803                    & 0.0235                      & 0.061145                   \\ \cline{3-8} 
\multicolumn{1}{c|}{}                                                                             & \multicolumn{1}{c|}{}                    & \multicolumn{1}{c|}{}                    & Linear                       & \multicolumn{1}{l|}{13764}                        & 0.294648                    & 0.0213                      & 0.0639                     \\
\multicolumn{1}{c|}{}                                                                             & \multicolumn{1}{c|}{}                    & \multicolumn{1}{c|}{\multirow{-2}{*}{4}} & NN                           & \multicolumn{1}{l|}{{\color[HTML]{CCCCCC} 16004}} & 0.289832                    & 0.0195                      & 0.090632                   \\ \cline{3-8} 
\multicolumn{1}{c|}{}                                                                             & \multicolumn{1}{c|}{}                    & \multicolumn{1}{c|}{}                    & Linear                       & \multicolumn{1}{l|}{20658}                        & 0.227143                    & 0.0183                      & 0.05673                    \\
\multicolumn{1}{c|}{\multirow{-6}{*}{CAE}}                                                        & \multicolumn{1}{c|}{\multirow{-6}{*}{2}} & \multicolumn{1}{c|}{\multirow{-2}{*}{6}} & NN                           & \multicolumn{1}{l|}{24018}                        & 0.211678                    & 0.0079                      & 0.011169                   \\ \hline
\end{tabular}
\end{table}

\end{document}